\documentclass{article}

\usepackage{float}

\usepackage{arxiv}

\usepackage[american]{babel}
\usepackage{natbib} % has a nice set of citation styles and commands
\usepackage{mathtools} % amsmath with fixes and additions
\usepackage{booktabs} % commands to create good-looking tables
\usepackage{tikz} % nice language for creating drawings and diagrams

\DeclareMathOperator*{\argmin}{arg\,min}
\DeclareMathOperator*{\argmax}{arg\,max}

\usepackage{array}
\usepackage{multirow}
\usepackage{amsthm}
\usepackage{amssymb}

\newtheorem{definition}{Definition}
\newtheorem{lemma}{Lemma}
\newtheorem{proposition}{Proposition}

\newtheorem{remark}{Remark}
\newtheorem{example}{Example}

\usepackage{caption}
\usepackage{subcaption} 
\usepackage{enumitem}
\usepackage{hyperref}
\usetikzlibrary{arrows}
\usetikzlibrary{positioning}
\usepackage[ruled,vlined,linesnumbered,nosemicolon]{algorithm2e}

\SetKwFunction{FnDA}{DA}
\SetKwProg{Fn}{Function}{:}{}

\title{Upper entropy for 2-monotone lower probabilities}

\author{
Tuan-Anh Vu \\
	UMR CNRS 7253 Heudiasyc\\ Université de Technologie de Compiègne\\
    France\\
    \texttt{tuan-anh.vu@hds.utc.fr} 
	\And
	Sébastien Destercke \\
	UMR CNRS 7253 Heudiasyc\\ Université de Technologie de Compiègne\\
    France\\
    \texttt{sebastien.destercke@hds.utc.fr} 
    \And
	Frédéric Pichon \\
	Laboratoire de Génie Informatique et d’Automatique de l’Artois (LGI2A)\\
    Université d'Artois, UR 3926, France\\
	\texttt{frederic.pichon@univ-artois.fr} \\
}

\begin{document}
\maketitle

\begin{abstract}
Uncertainty quantification is a key aspect in many tasks such as model selection/regularization, or quantifying prediction uncertainties to perform active learning or OOD detection. Within credal approaches that consider modeling uncertainty as probability sets, upper entropy plays a central role as an uncertainty measure. This paper is devoted to the computational aspect of upper entropies, providing an exhaustive algorithmic and complexity analysis of the problem. In particular, we show that the problem has a strongly polynomial solution, and propose many significant improvements over past algorithms proposed for 2-monotone lower probabilities and their specific cases. 
\end{abstract}

\section{Introduction}\label{sec:intro}

Imprecise probabilistic or \emph{credal}~\citep{levi1980enterprise} approaches offer rich representations of uncertainty, usually coming in the form of convex sets of probabilities or of equivalent representations such as lower envelope of expectations~\citep{troffaes2014lower}, that we will call here \emph{credal sets}. They have been used in many facets of AI, such as symbolic reasoning~\citep{hansen2000probabilistic}, graphical models~\citep{cozman2000credal} or machine learning~\citep{caprio2024credal}, with this latter trend gaining traction, as show many recent papers on the topic~\citep{singh2024domain,wang2024credal,nguyen2025credal,lohr2025credal}. 
Uncertainty quantification, by which we mean the fact of quantifying some aspect of an uncertainty representation by a value, is another important aspect of reasoning under uncertainty, and is a cornerstone of, e.g., Klir's theory~\citep{klir2006uncertainty}. Such uncertainty quantifications can serve, e.g., to select a peculiar model among many possible ones~\citep{dubey2018maximum}, as a regularization tool when used as a penalty term~\citep{grandvalet2006entropy}, or as a way to quantify a predictive uncertainty. 

Within credal approaches, upper entropy plays a particular role, as it can be shown to satisfy quite a number of axioms~\citep[Sec. 6.7.]{klir2006uncertainty} when considered  as measure of total uncertainty. It can also be associated to a robust, minimax version of using the logarithmic scoring rule: if your uncertainty is represented by a credal set $\mathcal{P}$, and if $\overline{\mathbb{E}}_\mathcal{P}[-\ln(Q)]$ is the maximum loss of choosing $Q$ under knowledge $\mathcal{P}$ with a logarithmic scoring rule, then the distribution $Q$ minimizing $\overline{\mathbb{E}}_\mathcal{P}[-\ln(Q)]$ is the one of maximal entropy~\cite[Sec. 5.12.]{walley1991statistical}. It is also commonly used in various learning settings, such as the learning of decision trees~\citep{abellan2005upper} or as a means to quantify total uncertainties of predictions~\citep{hullermeier2021aleatoric,javanmardi2024conformalized}. 

Given this importance of upper entropy, it is surprising that the computational aspects of calculating it for large families of credal sets has not been much explored. \cite{abellan2006algorithm} provide an algorithm for credal sets induced by 2-monotone capacities, yet only conclude that it is a ``difficult problem''. This algorithm is derived from the MRW algorithm~\citep{meyerowitz1994calculating,harmanec1996computation} that is dedicated to belief functions and is claimed to be of exponential complexity in the size of the considered space by~\cite{huynh2009notes}. Recent improvements of the MRW do not discuss complexity either~\citep{e25060867}. A more efficient algorithm, that is quadratic in the size of the space, exists for the specific case of probability intervals~\citep{abellan2003maximum}. 

The goal of this paper is to revisit the problem of computing upper entropies for credal sets $\mathcal{P}$ induced by 2-monotone (a.k.a. supermodular) capacities, that include many special cases such as belief functions, probability intervals, $\epsilon$-contamination models or possibility distributions. Our main contributions to this problem are the following:
\begin{itemize}
\item We demonstrate that by using supermodular optimization, the exact algorithm proposed by \cite{abellan2006algorithm} can be made polynomial by making a number of calls to supermodular function
maximization (SFM) that is quadratic in the space size. This insight also allows us to propose a new algorithm whose number of calls to SFM is linear in the space size. These results both show that the problem is not as hard as previously claimed, and improve significantly upon existing  methods. This is elaborated in Section~\ref{sec:Stronly Polynomiality}.
\item We provide improved algorithms for the specific cases of belief functions, possibility distributions and probability intervals, that are commonly considered in applications (see, e.g.,~\citep{caprio2025credal,chau2025integral}). These are detailed in Section~\ref{sec: special case}.
\item We propose in Section~\ref{sec:FW_algo} an approximation algorithm using the Frank--Wolfe  algorithm~\citep{Bachsubmodular13} that frames the true results between a lower and a upper bounds, thereby allowing us to control the approximation error. 
\end{itemize}
Finally, we propose in Section~\ref{sec:expe} some first experiments that compare our algorithms but also prove that our improvements make the associated computations scalable. Before that, let us start with some necessary preliminaries.

\section{Preliminaries}
\subsection{2-monotone lower probability}

Throughout this paper, we consider a finite space $\Omega=\{1,2,\ldots,n\}$. A set function $\nu: 2^\Omega \to \mathbb R$ with $\nu(\emptyset)= 0$ is called \emph{submodular} if 
\begin{equation}\label{eq:submodular_defintion}
    \nu(A) + \nu(B) \geq \nu(A\cup B) + \nu(A\cap B)~\forall A,B \subseteq \Omega. 
\end{equation}
Similarly, $\nu$ is called \emph{supermodular} if $-\nu$ is submodular, i.e., the inequality in (\ref{eq:submodular_defintion}) is reversed. 
As listing $2^n$ values is intractable, we assume that $\nu$ is accessed through an oracle that returns $\nu(A)$ for any query $A \subseteq \Omega$ \citep[Chapter 10]{grotschel1993geometric} and let $\mathrm{EO}$ be the time required for one query. The \emph{base polyhedron} $B(\nu)\subseteq \mathbb R^n$ of $\nu$ is defined as
\begin{equation}\label{eq:base_polyhedron}
    B(\nu)=\{x: \sum_{i\in A}x_i \leq \nu(A)~\forall A\subseteq \Omega,~ \sum_{i=1}^nx_i = \nu(\Omega)\}.
\end{equation}

Submodular (supermodular) functions appear in many domains, including the theory of imprecise probabilities \citep{walley1991statistical}. This uncertainty theory generalizes probability theory, that uses \emph{additive measures}, notably by considering probability sets that can be induced by \emph{lower probabilities}, that are \emph{non-additive measures}. A set function $\mu : 2^\Omega \to [0, 1]$ is called a lower probability if $\mu(\Omega) = 1,~\mu(\emptyset) = 0~\text{ and }\mu(A) \leq \mu(B)$ if $A\subseteq B$. The term ``lower probability'' comes from the interpretation that the true probability on $\Omega$ is unknown and $\mu$ encodes the available (limited) information in the form of lower bounds on event probabilities. Hence, the so-called \emph{credal set} consisting of probabilities on $\Omega$ that are compatible with $\mu$ is $\mathcal{P}(\mu):= \{p \in \Delta_n: p(A) \geq \mu(A)~\forall A \subseteq \Omega\}.
$

Many important lower probabilities are supermodular, and have a specific name in imprecise probabilities.
\begin{definition}
   A lower probability $\mu$ is called \emph{$2$-monotone} if it is supermodular.
\end{definition}
\begin{remark}\label{remark:credalset=base poly}
    For a lower probability $\mu$, its dual, called an \emph{upper probability} $\bar{\mu}$, is given by $\bar{\mu}(A)=1-\mu(\Omega\setminus A)~\forall A$. It can be seen that $ \mathcal{P}(\mu) = \{p: p(A) \leq \bar{\mu}(A)~\forall A\subseteq \Omega\}.$ Moreover, $2$-monotonicity of $\mu$ implies that $\bar{\mu}$ is submodular, so $\mathcal{P}(\mu)$ is exactly the base polyhedron $B(\bar{\mu})$ of $\bar{\mu}$ (see \eqref{eq:base_polyhedron}).
\end{remark}
Below, we highlight some notable special cases of $2$-monotone lower probabilities.

\paragraph{Belief Function}
A mass function on $\Omega$ is a mapping $m: 2^\Omega \to [0,1]$ satisfying $\sum_{A\subseteq \Omega}m(A)=1$ and $m(\emptyset)=0$ \citep{shafer1976mathematical}. A subset $A$ is called a \emph{focal set} of $m$ if $m(A)>0$. $m$ induces a special lower probability called \emph{belief function} defined as $\mathrm{Bel}(B):= \sum_{A\subseteq B}m(A)$. The dual of $\mathrm{Bel}$, the plausibility function, is defined as $\mathrm{Pl}(B):= \sum_{A\cap B\ne \emptyset}m(A)$. In this case, $\mathrm{EO}$ is polynomial in $|\{A:m(A) > 0\}|$, so is tractable if we do consider a mass function positive on a limited amount of subsets: this is for instance the case with k-additive belief functions~\citep{grabisch1997k} or imprecise cumulative distributions~\citep{montes2017extreme}. 

\paragraph{Possibility Distribution}
A possibility distribution $\pi$ on $\Omega$ is a vector in $[0,1]^n$ where each component $\pi_i$ represents the degree of possibility of element $i$.  $\pi$ induces a lower probability on $\Omega$, called the necessity measure, defined as
$\mathrm{N}(A):=1 -\max_{i \in \Omega \setminus A}\pi_i ~\forall A$. The dual of $\mathrm{N}$, the possibility measure, is $\Pi(A):= \max_{i \in  A}\pi_i ~\forall A $. Again, this means $\mathrm{EO}$ is polynomial. 

\paragraph{Probability Intervals} In this case, the credal set is given directly as
$ \mathcal{P}= \{p \in \Delta_n: l_i \leq p_i\leq u_i~\forall i\},$
where each intervals $[l_i, u_i] \subseteq [0,1]~\forall i \in \{1, \ldots, n\}$. Moreover, we require that  $\sum_{i=1}^nl_i \leq 1 \leq \sum_{i=1}^nu_i$ so that $\mathcal{P}\ne \emptyset$. \citet{de1994probability} showed that the set function $\mu$ where $\mu(A):=\min_{p\in \mathcal{P}}p(A)~\forall A$ is a 2-monotone lower probability. Note that \citet{de1994probability} provide formulas such that $\mathrm{EO}$ is polynomial in $|\Omega|$.

\subsection{Upper Entropy}
% Let $\mu$ be a 2-monotone lower probability whose credal set is $\mathcal{P}(\mu)$.
The upper entropy $\mathrm{UE}(\mu)$ of a lower probability $\mu$ is defined as
\begin{equation}\label{prob:upper_entropy}
    \mathrm{UE}(\mu):= \max_{p \in \mathcal{P}(\mu)}-\sum_{i=1}^np_i \log p_i.
\end{equation}

Algorithm \ref{algo:AM} presents the \citet{abellan2006algorithm} procedure for computing $\mathrm{UE}(\mu)$ when $\mu$ is 2-monotone. Setting $\mu = \mathrm{Bel}$ yields the MRW algorithm \citep{meyerowitz1994calculating,harmanec1996computation} for belief functions. However and as said in the introduction, none of these papers proceed through a detailed complexity analysis of these algorithms, merely concluding that the problem is ``difficult'', mostly due to Line 1 in Algorithm~\ref{algo:AM}. 

\begin{algorithm}
\caption{Abellán-Moral's algorithm}
\label{algo:AM}
\KwIn{2-monotone $\mu$ on $\Omega =\{1, \ldots,n\}$.}

Find $A$ such that $\frac{\mu(A)}{|A|}$ is maximum. If $A$ is not unique, the largest such set is selected \label{Line:1}

For each $i \in A$, $p_i = \frac{\mu(A)}{|A|}$

For each $B \subseteq \Omega \setminus A$, $\mu(B) = \mu(B \cup A) - \mu(A)$ \label{Line:3}

$\Omega = \Omega \setminus A$ \label{Line:4}

\lIf{$\Omega \neq \emptyset$ and $\mu(\Omega) > 0$}{
    go to 1 \label{Line:5}
}

\lIf{$\mu(\Omega) = 0$ and $\Omega \neq \emptyset$}{
    $p_i = 0$ for all $i \in \Omega$
}

\Return{ $-\sum_{i=1}^np_i \log p_i$\;}

\end{algorithm}

\section{Strong Polynomiality of computing Upper Entropy}\label{sec:Stronly Polynomiality}

We call an algorithm whose input is a supermodular (submodular) function \emph{strongly polynomial} if it performs a number of oracle queries and arithmetic operations \footnote{Arithmetic operations include addition, subtraction, multiplication, division, comparison operations, etc.} bounded by $\operatorname{poly}(n)$. A notable example is supermodular function maximization (SFM) (equivalently, submodular function minimization), which seeks a subset maximizing (resp. minimizing) a supermodular (resp. submodular) set function; several algorithms are known, e.g., Orlin’s algorithm \citet{orlin2009faster} runs in time $O(n^5\mathrm{EO} + n^6)$. Upon closer inspection, Algorithm \ref{algo:AM} can be implemented so that it is strongly polynomial. The key observation is that although the optimization problem in Line \ref{Line:1} is not an SFM, it can be transformed into a series of SFM. 

\begin{lemma}\label{Lemma:1}
    Finding $A\in \argmax_{\emptyset \neq B \subseteq \Omega} \frac{\mu(B)}{|B|}$ and $|A|$ is maximum amounts to solving $O(n)$ SFM.
\end{lemma}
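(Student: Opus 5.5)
The plan is to treat the ratio maximization as a parametric supermodular problem and solve it with a Newton/Dinkelbach scheme, or a Megiddo-style parametric search. For $\lambda\in\mathbb R$ define
\[
  f_\lambda(B) := \mu(B) - \lambda |B|,
\]
which is supermodular because $\mu$ is supermodular and $|B|$ is modular. Let $\lambda^* := \max_{\emptyset\ne B}\mu(B)/|B|$. Then $\max_{B\neq\emptyset} f_\lambda(B) \ge 0$ if and only if $\lambda\le\lambda^*$. Moreover, at $\lambda=\lambda^*$ the maximizers of $f_{\lambda^*}$ with value $0$ are exactly the optimal ratio sets, together with $\emptyset$.

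\textbf{Step 1: largest set at $\lambda^*$.} The maximizers of a supermodular function form a lattice, closed under union and intersection. So once $\lambda^*$ is known, the union of all optimal ratio sets is itself a maximizer of $f_{\lambda^*}$. Standard SFM algorithms (e.g. Orlin's) can return the largest maximizer. If not, the largest one can be obtained with $O(n)$ further SFM calls: for each $i$, test whether forcing $i$ into the set keeps the optimal value at $0$, by maximizing over sets containing $i$, which is again an SFM on the contracted function. This already fits the $O(n)$ budget.

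\textbf{Step 2: finding $\lambda^*$ with $O(n)$ SFM calls.} I would run Dinkelbach iterations:
\begin{enumerate}
  \item Start from $\lambda_0=\mu(\Omega)/n$.
  \item Given $\lambda_k$, solve the SFM for $f_{\lambda_k}$, obtaining a maximizer $B_k$; if ties occur, take the largest maximizer.
  \item If $f_{\lambda_k}(B_k)=0$, stop. Otherwise set $\lambda_{k+1}=\mu(B_k)/|B_k|$, which is strictly larger than $\lambda_k$.
\end{enumerate}
To bound the number of iterations, I would use the standard monotonicity argument for parametric supermodular maximization. As $\lambda$ increases, the maximal maximizer of $f_\lambda$ shrinks (the strong-nesting property of maximizers of $\mu(B)-\lambda|B|$, which follows from supermodularity). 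One then shows that $|B_k|$ strictly decreases along the iterations. The argument: if $|B_{k+1}|=|B_k|$, the nesting $B_{k+1}\subseteq B_k$ forces $B_{k+1}=B_k$, which gives $f_{\lambda_{k+1}}(B_{k+1})=0$ and termination. Since $|B_k|\in\{1,\dots,n\}$, there are at most $n$ iterations, each one SFM. The final $B_k$ is then the largest optimal set by Step 1, and $|A|$ is read off directly.

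\textbf{Main obstacle.} The hard part is rigorously proving that $|B_k|$ strictly decreases. This requires the nesting lemma: for $\lambda<\lambda'$, the maximal maximizer of $f_{\lambda'}$ is contained in that of $f_\lambda$. I would prove it from supermodularity by the usual exchange argument. Let $X$ be the maximal maximizer at $\lambda$ and $Y$ the maximal maximizer at $\lambda'$. Then
\[
  f_\lambda(X\cup Y)+f_{\lambda'}(X\cap Y) \;\ge\; f_\lambda(X)+f_{\lambda'}(Y) + (\lambda'-\lambda)\,|Y\setminus X|.
\]
This forces $Y\subseteq X$. Combining this inequality with the Dinkelbach identity $f_{\lambda_{k+1}}(B_k)=0$ yields the strict decrease, unless the algorithm terminates.
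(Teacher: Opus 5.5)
Your proof is correct, and its overall architecture matches the paper's. Both run Dinkelbach iterations on the supermodular family $\mu(B)-\lambda|B|$, with one SFM per iteration. Both then spend at most $n$ further SFM calls, forcing each $i$ into the set, to extract the union of all maximizers at $\lambda^*$.

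The real difference is how you bound the number of Dinkelbach iterations. The paper starts from $\lambda_0=0$ and notes that a maximizer $B_t$ with $|B_t|=k$ must maximize $\mu$ among all $k$-element sets. Hence $\lambda_{t+1}=M_k/k$ with $M_k:=\max_{|B|=k}\mu(B)$. Since the $\lambda_t$ strictly increase and lie in the $n$-element set $\{M_1/1,\ldots,M_n/n\}$, there are at most $n$ iterations. You instead prove a nesting lemma by the exchange inequality and conclude that $|B_k|$ strictly decreases.

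The paper's count is more elementary: it uses neither supermodularity nor any tie-breaking rule. It also shows that the step you flag as the main obstacle can be sidestepped:
\begin{itemize}
\item Adding the two optimality inequalities gives $|B_{k+1}|\le|B_k|$ for any set function.
\item In the equal-size case, both sets maximize $\mu$ over sets of that size, so they have the same ratio, which forces termination.
\end{itemize}
Your route buys more structure: the parametric maximizers form a chain, which is the phenomenon exploited by Algorithm~\ref{algo:decomposition}.

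One point to tighten. Your exchange inequality only uses that $X$ maximizes $f_\lambda$ and $Y$ maximizes $f_{\lambda'}$. So it actually shows that \emph{every} maximizer at $\lambda'$ lies inside \emph{every} maximizer at $\lambda$, and the instruction ``take the largest maximizer'' in Step 2 is unnecessary. This matters for the SFM budget. If the SFM routine does not return the largest maximizer for free, enforcing it at each of up to $n$ Dinkelbach iterations via your $O(n)$-call fallback would cost $O(n^2)$ SFM. With arbitrary maximizers, the fallback is needed only once, in Step 1.
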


The key of the proof, given in Appendix \ref{appendix: proof of lemma 1}, is that the function $\mu(A) - \lambda |A|$ is supermodular if $\mu$ is supermodular, and that finding the argmax within Lemma~\ref{Lemma:1} is equivalent to successively finding the set $A$ maximizing $\mu(A) - \lambda |A|$ (a SFM procedure, known to be polynomial), for $O(n)$ suitable values of $\lambda$.

\begin{proposition}\label{prop:strong_poly}
    Algorithm \ref{algo:AM} boils down to solving $O(n^2)$ SFM, and thus is strongly polynomial.
\end{proposition}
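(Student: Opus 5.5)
The plan is to bound the number of iterations of the main loop of Algorithm~\ref{algo:AM}, and to check that each iteration costs $O(n)$ SFM plus polynomially many other operations. Lemma~\ref{Lemma:1} then gives the count directly.

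\emph{Iteration count.} Each pass through Line~\ref{Line:1} returns a nonempty set $A$, and Line~\ref{Line:4} removes $A$ from the current ground set. So the current ground set shrinks by at least one element per pass, and the loop runs at most $n$ times. The lines after the loop take $O(n)$ arithmetic operations.

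\emph{Supermodularity is preserved.} To apply Lemma~\ref{Lemma:1} at every pass, the contracted function of Line~\ref{Line:3}, $\mu'(B)=\mu(B\cup A)-\mu(A)$ on $\Omega\setminus A$, must remain a 2-monotone lower probability (or at least a supermodular function with $\mu'(\emptyset)=0$). I will check the required properties directly.
\begin{itemize}
\item $\mu'(\emptyset)=0$ holds trivially.
\item Monotonicity is inherited from $\mu$.
\item For supermodularity, take $B,C\subseteq\Omega\setminus A$. Then $(B\cup A)\cup(C\cup A)=(B\cup C)\cup A$ and $(B\cup A)\cap(C\cup A)=(B\cap C)\cup A$. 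Supermodularity of $\mu$ applied to $B\cup A$ and $C\cup A$ gives the inequality for $\mu'$, since the $-\mu(A)$ terms cancel.
\end{itemize}
The normalization $\mu'(\Omega\setminus A)=1-\mu(A)$ may be less than $1$. This does not matter, because Lemma~\ref{Lemma:1} only uses supermodularity of $\mu(\cdot)-\lambda|\cdot|$. If needed, I would note that the lemma's proof does not use $\mu(\Omega)=1$.

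\emph{Oracle access.} The contracted function need not be tabulated, which would cost exponential time. It can be simulated through the original oracle. After $k$ passes with removed sets $A_1,\dots,A_k$, we have $\mu^{(k)}(B)=\mu(B\cup A_1\cup\dots\cup A_k)-\mu(A_1\cup\dots\cup A_k)$, which telescopes. Hence one query to $\mu^{(k)}$ costs $O(1)$ original oracle calls plus $O(n)$ set operations. The test $\mu(\Omega)>0$ in Line~\ref{Line:5} is a single such query.

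\emph{Counting.} Each pass makes $O(n)$ SFM calls by Lemma~\ref{Lemma:1}, and there are at most $n$ passes, for $O(n^2)$ SFM in total. Each SFM is strongly polynomial, e.g.\ Orlin's algorithm in $O(n^5\mathrm{EO}+n^6)$ time. The remaining arithmetic (assigning the $p_i$, computing the entropy) is $O(n)$ per pass. The whole algorithm is therefore strongly polynomial, of order $O(n^7\mathrm{EO}+n^8)$.

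The main obstacle is making sure the contraction step neither breaks the hypotheses of Lemma~\ref{Lemma:1} nor hides an exponential cost. The telescoping oracle and the supermodularity check above handle both. A minor point is that $\log$ evaluations in the final line are not arithmetic operations in the strict sense. I would remark that the $O(n^2)$ SFM count concerns computing the maximizer $p$, with the entropy evaluated at the end.
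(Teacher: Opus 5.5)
Your proof is correct and follows the same route as the paper: the loop makes at most $n$ passes, each pass costs $O(n)$ SFM by Lemma~\ref{Lemma:1}, and the contraction in Line~\ref{Line:3} preserves supermodularity. The only difference is that you check this preservation directly and make the telescoping oracle for the contracted function explicit, whereas the paper cites Abell\'an and Moral's Property~1 and simply notes that Line~\ref{Line:3} supplies the evaluation oracle.
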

\begin{proof}
Line \ref{Line:1} of Algorithm \ref{algo:AM} requires solving $O(n)$ SFM (Lemma \ref{Lemma:1}). Moreover, the update in Lines \ref{Line:3}--\ref{Line:4} still results in a supermodular function on $\Omega\setminus A$, e.g., \citep[Property 1]{abellan2006algorithm}, whose evaluation oracle is given by Line \ref{Line:3}. As $\Omega$ shrinks by at least one element per iteration (Line \ref{Line:4}), Algorithm \ref{algo:AM} terminates after at most $n$ passes through the loop in Lines \ref{Line:1}--\ref{Line:5}. Hence, it requires solving $O(n^2)$ SFM.
\end{proof}

As it turns out, we can compute $\mathrm{UE}(\mu)$ with improved complexity by using a decomposition algorithm \citep{NaganoK13, fujishige2005submodular}. Let $\nu$ be a submodular function and $b$ be a positive vector in $\mathbb {R}^n$, \citet[Sec. 5]{NaganoK13} shows that solving
\begin{equation}\label{prob:Nagano}
   \min_{x\in B(\nu)}\sum_{i=1}^n (x_i\log\frac{x_i}{b_i} + b_i - x_i)
\end{equation}
reduces to finding a chain $\emptyset=S_0\subset\cdots\subset S_l=\Omega$ and breakpoints $0=\alpha_0<\alpha_1<\cdots<\alpha_l<\alpha_{l+1}=+\infty$ such that, for any $\alpha\in[\alpha_j,\alpha_{j+1})$, $S_j$ is the unique minimizer of maximum cardinality of $\min_{A\subseteq\Omega} \nu(A)-\alpha\sum_{i\in A}b_i$. The minimizer of (\ref{prob:Nagano}) is, for each $i \in S_{j} \setminus S_{j-1} ~(j \in {1,\ldots, l})$
\begin{equation}\label{eq:optimal_nagano}
    x_i = \frac{\nu(S_{j})-\nu(S_{j-1})}{\sum_{i\in S_{j}\setminus S_{j-1}}b_i}b_i.
\end{equation}
Algorithm \ref{algo:decomposition} aims to compute the chain $\{S_j\}$. Given two sets $S_j \subset S_k$ in the chain (initially $S_j= \emptyset$, $S_k= \Omega$), it recursively checks if $S_j$ and $S_k$ are consecutive, and if not, it finds a new chain set strictly between them and repeats the process until the whole chain is revealed. In total, \citet{NaganoK13} shows that Algorithm \ref{algo:decomposition} requires solving $O(n)$ SFM (Line \ref{Line:SFM_solving}).

\begin{algorithm}
\caption{Decomposition algorithm}
\label{algo:decomposition}
\KwIn{submodular $\nu$ on $\Omega =\{1, \ldots,n\}$; $b\in \mathbb {R}^n $.}
\Fn{\FnDA{$S, S'$}}{
        $\alpha = \dfrac{\nu(S')-\nu(S)}{\sum_{i\in S'\setminus S}b_i}$\;
        Take $S'' \in \argmin_{A\subseteq\Omega} \nu(A)-\alpha\sum_{i\in A}b_i$ s.t $|S''|$ is largest\label{Line:SFM_solving}\;
        
        % Using \lIf for single-line If
        \lIf{$S''=S'$}{\Return{$\{S,S'\}$}}
        
        % Using \lElse for single-line Else
        \lElse{\Return{\FnDA{$S,S''$} $\cup$ \FnDA{$S'',S'$}}}
    }

\BlankLine
\Return $\mathcal{S}^* = \FnDA(\emptyset,\Omega)$\;
\end{algorithm}

\begin{proposition}
    Computing $\mathrm{UE}(\mu)$ amounts to running Algorithm \ref{algo:decomposition} with $\bar{\mu}$ (the dual of $\mu$) and $b=\mathbf 1$, which requires $O(n)$ SFM.
\end{proposition}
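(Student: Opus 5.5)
The plan is to recognize $\mathrm{UE}(\mu)$ as an instance of problem \eqref{prob:Nagano} with $\nu=\bar{\mu}$ and $b=\mathbf 1$, and then invoke the result of \citet{NaganoK13} quoted above.

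\textbf{Step 1: the feasible set.} By Remark~\ref{remark:credalset=base poly}, since $\mu$ is 2-monotone, $\bar{\mu}$ is submodular and $\mathcal{P}(\mu)=B(\bar{\mu})$. Since $\bar{\mu}(\emptyset)=1-\mu(\Omega)=0$, the function $\bar{\mu}$ is admissible as the input $\nu$ of Algorithm~\ref{algo:decomposition}. Its evaluation oracle costs one query to $\mu$ plus a subtraction, so EO is unchanged up to a constant.

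\textbf{Step 2: the objective.} Take $b=\mathbf 1$. Then the objective of \eqref{prob:Nagano} becomes
\[
\sum_{i=1}^n (x_i\log x_i + 1 - x_i)=\sum_{i=1}^n x_i\log x_i + n - \sum_{i=1}^n x_i .
\]
On $B(\bar{\mu})$ we have $\sum_i x_i=\bar{\mu}(\Omega)=1$, so this equals $\sum_i x_i\log x_i + (n-1)$. Minimizing it is therefore equivalent to maximizing $-\sum_i x_i\log x_i$ over $\mathcal{P}(\mu)$. The two problems share the same optimizer $p^*$, and
\[
\mathrm{UE}(\mu) = -\sum_{i=1}^n p^*_i\log p^*_i .
\]

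\textbf{Step 3: recovering $p^*$ and the cost.} The result of \citet[Sec.~5]{NaganoK13} states that the minimizer is given by \eqref{eq:optimal_nagano} from the chain $\{S_j\}$ returned by Algorithm~\ref{algo:decomposition}. With $b=\mathbf 1$ this reads
\[
p^*_i=\frac{\bar{\mu}(S_j)-\bar{\mu}(S_{j-1})}{|S_j\setminus S_{j-1}|} \quad\text{for } i\in S_j\setminus S_{j-1}.
\]
Once the chain is known, this takes $O(n)$ oracle calls and arithmetic operations. Each call in Line~\ref{Line:SFM_solving} minimizes $\bar{\mu}(A)-\alpha|A|$, which is submodular minus modular, hence submodular, so each call is one SFM. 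Finding the minimizer of largest cardinality is standard in SFM: the maximal minimizer is the union of all minimizers, and the usual algorithms return it. \citet{NaganoK13} bound the number of such calls by $O(n)$. The argument is that every call either certifies an adjacent pair in the chain or produces a new chain element, and the chain has at most $n+1$ elements.

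\textbf{Main obstacle.} The only delicate point is the boundary case where some $p^*_i=0$. This occurs when $\mu(\Omega\setminus S)=1$ for some proper $S$, which corresponds to the final step of Algorithm~\ref{algo:AM}. Here I must check that Nagano--Kawahara's result still applies: their analysis assumes $b>0$, which holds, but it allows $x_i=0$ with the convention $0\log 0=0$. The objective remains continuous and strictly convex on the compact polytope, so the minimizer is unique and is still characterized by the chain. The first breakpoint $\alpha_1$ may then equal $0$ instead of being strictly positive; I would handle this by noting that the chain characterization holds with $\alpha_0\le\alpha_1$, or by a limiting argument on $\alpha$. Beyond this, the proof is just assembling Steps 1--3.
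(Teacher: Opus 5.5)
Your proposal is correct and follows the paper's own route: use Remark~\ref{remark:credalset=base poly} to get $\mathcal{P}(\mu)=B(\bar{\mu})$, note that the objective of \eqref{prob:Nagano} with $b=\mathbf 1$ reduces to $\sum_i x_i\log x_i+n-1$ on that polytope, then read off $p^*$ from the chain via \eqref{eq:optimal_nagano} and take the $O(n)$ SFM bound from \citet{NaganoK13}. Your discussion of the case $p^*_i=0$ goes beyond the paper, which ignores it, and you are right that the first breakpoint then becomes $\alpha_1=0$; however, the condition should read $\mu(\Omega\setminus S)=1$ for some \emph{nonempty} $S$ rather than ``proper'' $S$, since $S=\emptyset$ satisfies it trivially.
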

\begin{proof}
    With the provided input, Problem \eqref{prob:Nagano} becomes $\min_{x\in B(\nu)}\left(\sum_{i=1}^n x_i\log x_i\right) + n - 1$. It has the same optimal solution as \eqref{prob:upper_entropy} because of Remark \ref{remark:credalset=base poly}. Algorithm \ref{algo:decomposition} outputs the chain $\{S_j\}$ and the optimal $p$ of \eqref{prob:upper_entropy} is obtained via (\ref{eq:optimal_nagano}).
\end{proof}

\begin{remark}
    There is a tight connection between Algorithm \ref{algo:AM} and Algorithm \ref{algo:decomposition}: the sets $A_1, A_2,\ldots$ successively computed in Line \ref{Line:1} of the former are exactly the sets $S_l\setminus S_{l-1}, S_{l-1}\setminus S_{l-2},\ldots $ from the chain $\emptyset=S_0\subset\cdots\subset S_l=\Omega$ returned by the latter with the input $\bar{\mu}$ and $\mathbf{1}$. 
\end{remark}
\begin{example}
Consider a 2-monotone $\mu$ and its dual $\bar{\mu}$ on $\Omega = \{1,2,3\}$. Their values are given below (excluding the obvious values of $\emptyset$ and $\Omega$). 
Algorithm \ref{algo:AM}, with input $\mu$, outputs the sets $\{2,3\}$ and $\{1\}$ in this order at Line \ref{Line:1}. Hence, the optimal probability is $p= (0.2, 0.4, 0.4)$.
$$
\begin{array}{c|ccccccc}
 A & \{1\} & \{2\} & \{3\} & \{1,2\} & \{1,3\} & \{2,3\} \\
 \hline
 \mu(A) & 0.1 & 0.4 & 0.4 & 0.5 & 0.5 & 0.8\\
 \bar{\mu}(A) & 0.2 & 0.5 & 0.5 & 0.6 & 0.6 & 0.9
\end{array}$$
Let us illustrate Algorithm \ref{algo:decomposition} with $\bar{\mu}$ and $b=\mathbf 1$.
First, we run $\FnDA(\emptyset,\Omega)$, for which the associated $\alpha= \frac{\bar{\mu}(\Omega)-\bar{\mu}(\emptyset)}{|\Omega|}=\frac{1}{3}$. Second, we solve $\min_{A\subseteq\Omega}\bar{\mu}(A)-\frac{1}{3}|A|$ and obtain $A=
\{1\}$ the maximal minimizer. Third, we would proceed recursively by executing $\FnDA(\emptyset,\{1\})$ and $\FnDA(\{1\}, \Omega)$; but since there is no set strictly between $\emptyset$ and $\{1\}$, we only need to run $\FnDA(\{1\}, \Omega)$, for which $\alpha=\frac{\bar{\mu}(\Omega)-\bar{\mu}(\{1\})}{|\Omega\setminus \{1\}|}=0.4$. Finally, solving $\min_{A\subseteq\Omega}\bar{\mu}(A)-0.4|A|$, we find that $\Omega$ is the maximal minimizer, implying there is no set in the chain strictly between $\{1\}$ and $\Omega$, so we stop. Hence, Algorithm \ref{algo:decomposition} outputs the chain $\emptyset \subset \{1\} \subset \Omega$. Applying (\ref{eq:optimal_nagano}), we also get $p= (0.2, 0.4, 0.4)$.

\end{example}

\section{Computing upper entropy in special cases}\label{sec: special case}
\subsection{Belief function}

Let $m$ be a mass function on $\Omega$ with $\mathrm{Pl}$ its plausibility function  and $\mathcal{F}$ its set of focal sets. We consider a directed bipartite graph with partitions $L$ and $R$ in which each \emph{left vertex} in $L$ (resp. \emph{right vertex} in $R$) corresponds to an element of $\Omega$ (resp. focal set of $m$), and an edge from $j$ to $F_i$ iff $j \in F_i$. From this graph, we construct a flow network $G$ by adding a source $s$ and a sink $t$, together with edges $(s,j)$ for all elements $j \in \Omega$ and $(F_i,t)$ for all focal sets $F_i$ of $m$. Define a capacity function $c$ on edges as $c(s,j)=\alpha$, $c(j,F_i)=+\infty$, and $c(F_i,t)=m(F_i)$.
A cut $(S, T)$ of $G$ is simply a partition of its vertices into a \emph{source set} $S$ (where $s \in S$) and a \emph{sink set} $T = V \setminus S$ (where $t \in T$)  \citep[Chapter 26]{cormen2009introduction}. The capacity of $(S,T)$ is the sum of capacities of edges leaving $S$, i.e., $c(S,T)= \sum_{u\in S}\sum_{v \in T}c(u,v)$. As the cut $(S,T)$ where $T= \{t\}$ has capacity 1, the minimum cut capacity is at most $1$.

In this case, Line \ref{Line:SFM_solving} of Algorithm \ref{algo:decomposition} boils down to minimum-cut/maximum-flow computations and is thus much faster than general SFM.
\begin{proposition}\label{prop:min_cut_belief}
     Solving $\min_{A \subseteq \Omega}\mathrm{Pl}(A)-\alpha|A|$ with $\alpha>0$ is equivalent to finding a minimum cut $(S,T)$ in $G$.
\end{proposition}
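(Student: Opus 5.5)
The plan is to compute the capacity of an arbitrary finite cut, express it as a function of the set $A$ of left vertices placed on the source side, and show that its minimum over all cuts equals $\mathrm{Pl}(A)-\alpha|A|$ up to an additive constant. Since that constant does not depend on the cut, minimizing over cuts and minimizing over subsets $A$ then give corresponding minimizers.

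\textbf{Step 1: cut capacity.} Let $(S,T)$ be a cut and set $A=S\cap\Omega$, the elements $j$ with $j\in S$. The edges leaving $S$ are of three kinds.
\begin{itemize}
\item Edges $(s,j)$ with $j\in T$, i.e.\ $j\notin A$. They contribute $\alpha(n-|A|)$.
\item Edges $(j,F_i)$ with $j\in A$ and $F_i\in T$. These have infinite capacity, so in any finite cut every focal set $F_i$ with $F_i\cap A\neq\emptyset$ lies in $S$.
\item Edges $(F_i,t)$ with $F_i\in S$. They contribute $\sum_{F_i\in S} m(F_i)$.
\end{itemize}
Finite cuts exist: for instance $T=\{t\}$ has capacity $1$. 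So a minimum cut is finite. For fixed $A$, the capacity is minimized by placing in $S$ exactly the focal sets that meet $A$. Any further focal set moved into $S$ only adds a nonnegative term $m(F_i)$. Removing one of the forced focal sets from $S$ would create an infinite edge.

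\textbf{Step 2: identify the minimum for fixed $A$.} With that canonical choice,
\[
c(S,T)=\alpha n-\alpha|A|+\sum_{F_i\cap A\neq\emptyset}m(F_i)=\alpha n+\mathrm{Pl}(A)-\alpha|A|,
\]
using the definition $\mathrm{Pl}(A)=\sum_{F\cap A\neq\emptyset}m(F)$. Hence
\[
\min_{(S,T)} c(S,T)=\alpha n+\min_{A\subseteq\Omega}\bigl(\mathrm{Pl}(A)-\alpha|A|\bigr).
\]

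\textbf{Step 3: transfer minimizers both ways.} If $(S,T)$ is a minimum cut, then $A=S\cap\Omega$ minimizes $\mathrm{Pl}(A)-\alpha|A|$. Conversely, any minimizer $A$ together with its canonical focal sets, $S=\{s\}\cup A\cup\{F_i:F_i\cap A\neq\emptyset\}$, gives a minimum cut. This bijection between minimizers $A$ and canonical minimum cuts also preserves $|A|$. This matters because Algorithm~\ref{algo:decomposition} asks for the minimizer of largest cardinality, so one can take a source-maximal minimum cut (computed from the residual graph of a maximum flow) to get the maximal minimizer. I would add this remark, relying on the lattice structure of minimum cuts.

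The only real subtlety is Step 1. One must handle the infinite edges carefully and argue that non-canonical placements of focal vertices never help. This uses $m\ge 0$, so an extra focal set in $S$ costs $m(F_i)\ge 0$. With zero-mass focal sets excluded, the extra cost is strictly positive, which also makes the minimum-cut-to-minimizer correspondence clean. The hypothesis $\alpha>0$ is not really needed beyond ensuring capacities are nonnegative.
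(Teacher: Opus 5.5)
Your proposal is correct and follows essentially the same argument as the paper: the infinite-capacity edges force every focal set meeting $A$ onto the source side, and any focal set disjoint from $A$ placed on the source side only adds $m(F)>0$, so the optimal cut for a given $A$ has capacity $\mathrm{Pl}(A)-\alpha|A|+n\alpha$. You phrase this as an inner minimization over where the focal vertices go for fixed $A$, which makes the converse direction (every minimizer $A$ yields a minimum cut) explicit, whereas the paper leaves that direction implicit.
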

\begin{proof}
Consider a minimum cut $X=(S,T)$ where $S = {s} \cup L^\prime \cup R^\prime$, with $L^\prime$ a set of left vertices (elements of $\Omega$) and $R^\prime$ a set of right vertices (focal sets of $m$). If an element $i\in L^\prime$, then $R^\prime$ contain all focal sets $F$ such that $i \in F$, otherwise there is an edge with capacity $+\infty$ leaving $S$. Furthermore, if there is a focal set $F\in R^\prime$, then $L^\prime\cap F \ne \emptyset$, otherwise the cut formed by removing $F$ from $R^\prime$ has capacity of $c(X)-m(F) < c(X)$. Therefore, $X$ is of the form with
$ S = \{s\} \cup A \cup \{F\in \mathcal{F}: F \cap A \ne \emptyset\},  \text{ for a subset $A\subseteq \Omega$}.$ By the construction of $G$,
\begin{align*}
     &c(X)= \alpha|\Omega \setminus A| + \sum_{F\cap A\ne \emptyset}m(F)\\ &= \sum_{F\cap A \ne \emptyset}m(F) - \alpha|A| + n\alpha= \mathrm{Pl}(A) -\alpha|A| +n\alpha,
\end{align*}
so $\min_{A \subseteq \Omega}\mathrm{Pl}(A)-\alpha|A|= c^* - n\alpha$ with $c^*$ is the minimum cut capacity. The proposition holds as $n\alpha$ is constant. 
\end{proof}
\begin{example}\label{example:flow}
    Consider a mass $m$ on $\Omega := \{1,2,3,4\}$ whose focal sets are $F_1 =\{1,2\}, F_2 =\{2,3\}$ and $F_3 = \{3,4\}$. The flow network from Proposition \ref{prop:min_cut_belief} is shown in Figure \ref{fig:flow-network}, which we use to illustrate its proof. Namely, the cut with $S = \{s\} \cup \{1\} \cup \{F_1\}$ is not minimum because edge $(1,F_2)$ has capacity $+\infty$. The cut with $S = \{s\} \cup \{1,2\} \cup \{F_1, F_2, F_3\}$ is not minimum because removing $F_3$ from $S$ results in a cut with strictly smaller capacity. 
\end{example}

We still need to select $A^*\in \argmin_{A\subseteq\Omega}\mathrm{Pl}(A) - \alpha |A|$ with maximum cardinality. It is well-known that for a minimum cut $(S^*, T^*)$ such that $|S^*|$ is maximum, $S^*$ is the union of all minimum cut source sets \citep{PicardQueyranne1980MinCuts}. It follows that $A^*$ consists of exactly the left vertices in $S^*$, found as follows. By the max-flow min-cut theorem \citep[Theorem 26.6]{cormen2009introduction}, we first compute a maximum flow of $G$ and construct the associated residual graph. Then $A^*$ is the set of left vertices that cannot reach $t$ in this residual graph \citep{PicardQueyranne1980MinCuts}. Hence, we obtain: 
\begin{proposition}
    For belief functions, computing upper entropy via Algorithm \ref{algo:decomposition} requires $O(n)$ maximum-flow computations and $O(n)$ depth-first searches.
\end{proposition}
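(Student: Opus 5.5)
The plan is to combine three ingredients already set up in the paper. First, the earlier proposition on computing $\mathrm{UE}(\mu)$ via Algorithm~\ref{algo:decomposition} with $\bar{\mu}=\mathrm{Pl}$ and $b=\mathbf 1$ shows that the whole computation reduces to $O(n)$ executions of Line~\ref{Line:SFM_solving}. Second, Proposition~\ref{prop:min_cut_belief} turns each execution into a minimum-cut problem. Third, the Picard--Queyranne characterization recovers the maximal minimizer. The non-SFM work (computing $\alpha$, forming the chain, applying \eqref{eq:optimal_nagano}) consists only of evaluations of $\mathrm{Pl}$ and arithmetic, so it does not affect the count.

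The first step is to recall why there are only $O(n)$ calls. Each call of \FnDA\ either certifies that $S,S'$ are consecutive in the chain or inserts a new chain set $S''$ strictly between them. The chain has at most $n+1$ sets, so at most $n-1$ insertions occur. Every call corresponds to an edge of a binary recursion tree whose leaves are the consecutive pairs, of which there are at most $n$. Hence there are at most $2n-1$ calls, which is the $O(n)$ bound of \citet{NaganoK13}.

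Next, fix a call with its value $\alpha$. Since $\mathrm{Pl}(S')>\mathrm{Pl}(S)$ is not guaranteed in general, I need $\alpha>0$ to apply Proposition~\ref{prop:min_cut_belief}. If $\alpha\le 0$, the set function $\mathrm{Pl}(A)-\alpha|A|$ is nondecreasing, because $\mathrm{Pl}$ is monotone. Then $\emptyset$ is a minimizer, and every $A$ with $\mathrm{Pl}(A)=0$ (when $\alpha=0$) is one too. This degenerate case is handled trivially: the maximal minimizer is the union of elements outside every focal set, or just $\emptyset$. With $\alpha>0$, Proposition~\ref{prop:min_cut_belief} gives the identity $c(X)=\mathrm{Pl}(A)-\alpha|A|+n\alpha$. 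That identity holds for cuts of the canonical form $S=\{s\}\cup A\cup\{F:F\cap A\neq\emptyset\}$, and every minimum cut has this form. So minimizers $A$ correspond exactly to minimum cuts through $A=S\cap L$, and one maximum-flow computation yields the minimum value.

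The main obstacle is extracting the \emph{largest} minimizer from the flow, which Line~\ref{Line:SFM_solving} requires. I will use that the source sets of minimum cuts form a lattice closed under union and intersection \citep{PicardQueyranne1980MinCuts}. Its maximum element is $S^*=V\setminus\{v: v \text{ reaches } t \text{ in the residual graph}\}$. I must check that $S^*\cap L$ is the maximum-cardinality minimizer $A^*$. Monotonicity of the correspondence $A\mapsto S_A$ (canonical form) gives this: if $A$ is any minimizer, then $S_A$ is a minimum-cut source set. Therefore $S_A\subseteq S^*$, so $A\subseteq S^*\cap L$, and $S^*\cap L$ is itself a minimizer because $S^*$ is a minimum cut and hence canonical. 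Computing $S^*$ takes a single graph search from $t$ on reversed residual edges, that is, one depth-first search.

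Multiplying these per-call costs by the $O(n)$ calls gives $O(n)$ maximum-flow computations and $O(n)$ depth-first searches.
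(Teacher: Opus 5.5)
Your proposal is correct and follows the same route as the paper: the $O(n)$ calls to Line~\ref{Line:SFM_solving} from \citet{NaganoK13}, each replaced by a minimum-cut computation via Proposition~\ref{prop:min_cut_belief}, with the largest minimizer read off as $S^*\cap L$ from the residual graph following \citet{PicardQueyranne1980MinCuts}. You also make explicit several points the paper leaves implicit, all of which check out: the $2n-1$ bound on calls, the degenerate case $\alpha=0$, and the inclusion $S_A\subseteq S^*$ showing that $S^*\cap L$ is the maximum-cardinality minimizer.
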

% Hence, Algorithm \ref{algo:decomposition} requires $O(n)$ maximum-flow computations and $O(n)$ depth-first searches. 
As the number of edges of $G$ and its residual graphs are bounded by $O(n|\mathcal{F}|)$, the running time of Algorithm \ref{algo:decomposition} is $\mathrm{poly}(n,|\mathcal{F}|)$ with the exact complexity depending on the chosen maximum-flow algorithm.

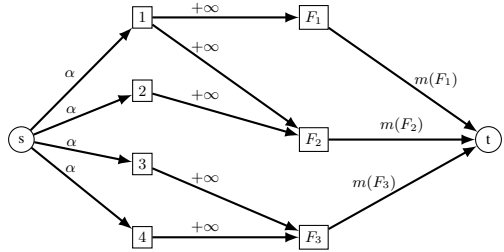
\begin{figure}[H]
    \centering
        \begin{tikzpicture}[
            scale=0.65,
            transform shape,
            edge/.style={draw, ->, thick, >=latex},
            every node/.style={draw,font=\small}
        ]
        % Nodes
        \node (s) [circle] {s};

        \node (o1) [rectangle,  right= of s, xshift=1cm, yshift=2.5cm] {$1$};
        \node (o2) [rectangle, right= of s, xshift=1cm, yshift=1cm] {$2$};
        \node (o3) [rectangle, right= of s, xshift=1cm, yshift=-0.5cm] {$3$};
        % \node (o4) [rectangle, right= of s, xshift=0.5cm, yshift=-1cm] {$4$};
        \node (o4) [rectangle, right= of s, xshift=1cm, yshift=-2cm] {$4$};

        \node (F1) [rectangle, right=of o1, xshift=2cm, yshift=0cm] {$F_1$};
        \node (F2) [rectangle, right=of o3, xshift=2cm,  yshift=0.5cm] {$F_2$};
        \node (F3) [rectangle, right=of o4, xshift=2cm, yshift=0cm] {$F_3$};

        \node (t) [circle, right=of F2, xshift=2cm] {t};

        \draw[edge] (F1) -- (t) node[midway, above=0.8cm, right=0.1cm, draw=none] {$m(F_1)$};
        \draw[edge] (F2) -- (t) node[midway, above=0cm, draw=none] {$m(F_2)$};
        \draw[edge] (F3) -- (t) node[midway, above=0.1cm, left, draw=none] {$m(F_3)$};
        %intermediate edges

        \draw[edge] (o1) -- (F1)  node[midway, above=0.2cm, left, draw=none] {$+\infty$};
        \draw[edge] (o1) -- (F2) node[midway, above=0.6cm, left, draw=none] {$+\infty$};
        \draw[edge] (o2) -- (F2) node[midway, above=0.4cm, left, draw=none] {$+\infty$};
        \draw[edge] (o3) -- (F3) node[midway, above=0.45cm, left, draw=none] {$+\infty$};

        \draw[edge] (o4) -- (F3) node[midway, above=0.2cm, left, draw=none] {$+\infty$};
        % \draw[edge] (o5) -- (F3) node[midway, below=0.2cm, left, draw=none] {$+\infty$};

        \draw[edge] (s) -- (o1) node[midway, above=0.1cm, left, draw=none] {$\alpha$};
        \draw[edge] (s) -- (o2) node[midway, above=0.1cm, left, draw=none] {$\alpha$};
        \draw[edge] (s) -- (o3) node[midway, above=0.2cm, left, draw=none] {$\alpha$};
        % \draw[edge] (s) -- (o4) node[midway, above=0.2cm, left, draw=none] {$\alpha$};
        \draw[edge] (s) -- (o4) node[midway, above=0.4cm, left, draw=none] {$\alpha$};
        \end{tikzpicture}
        \caption{Flow network in Example \ref{example:flow}.}
        \label{fig:flow-network}
\end{figure}

\subsection{Possibility distribution}
\citet[Algorithm 6.2]{klir2006uncertainty} has particularized Algorithm \ref{algo:AM} for the case of a possibility distribution $\pi$ under the standard convention that $1= \pi_1 \geq \pi_2 \geq \ldots \geq \pi_n >0$. Unsurprisingly, this particularization inherits the quadratic complexity (see Proposition~\ref{prop:strong_poly}), yielding an $O(n^2)$ running time.
We will, however, design an algorithm achieving $O(n)$ complexity.

Recall in Section \ref{sec:Stronly Polynomiality} that our goal is to compute the chain $\emptyset=S_0\subset\cdots\subset S_l=\Omega$ where $S_j$ is the unique maximal minimizer of $\min_{A\subseteq \Omega}\Pi(A)-\alpha |A|$ $~\forall \alpha\in[\alpha_j,\alpha_{j+1})$. To this end, we employ a procedure different from Algorithm \ref{algo:decomposition} that specifically exploits the structure of $\pi$.
Due to the ordering convention, $\Pi(A)= \pi_{j}$ where $j$ is the minimum element of $A$, and thus $\{{j},  \ldots, n\}$ is the largest set having the same possibility as $A$. Therefore,$~\forall \alpha \geq 0$,

\begin{equation}\label{eq:possibility_form}
    \min_{A\subseteq \Omega}\Pi(A)-\alpha |A| = \min_{j \in \{1, \ldots, n, n+1\}} \pi_j - \alpha (n-j+1),
\end{equation}
with $\pi_{n+1} := 0$ to handle the case when $\emptyset$ minimizes the left-hand side at $\alpha = 0$. Let $g_j(\alpha):= \pi_j - \alpha (n-j+1)$ and $g:= \min_{j \in \{1, \ldots, n+1\}}g_j$. To find the desired chain, we compute all breakpoints in $[0, +\infty)$ of $g$ and select the smallest $j$ such that $g_j = g$ at each breakpoint. This ensures that the associated minimizer of the left-hand side of \eqref{eq:possibility_form} has maximum size, which is $\{j, \ldots, n\}$.

Our task can be done via a classic problem in computational geometry. More precisely,
in this field, $g$ is called the \emph{lower envelope} of lines $g_j$. Under the duality that maps a line $\ell :y=m\alpha+b$ to its dual point $\ell^*:= (m, -b)$, $g$ corresponds one-to-one with the \emph{upper convex hull} \footnote{The upper convex hull is the chain of edges on the convex hull’s upper boundary.} of the set of dual points $\{g^*_j: 1\leq j \leq  n+1 \}$ where $g^*_j= (j-n-1, -\pi_j)$ \citep[Chapter 11.4]{de2008computational}. 

We employ this correspondence to our setting to solve $\min_{A\subseteq \Omega}\Pi(A)-\alpha |A|~\forall \alpha$, described in Algorithm \ref{algo:UE_pos}. Because of this correspondence, we simply traverse the upper hull’s vertices from right to left, recording the breakpoints as the slopes of the visited edges (Line \ref{Line:slope edge upper hull}). Furthermore, the smallest minimizing indices associated with the breakpoints are computed in Line~\ref{Line:smallest indices}, from which the maximal minimizers are obtained immediately (Line \ref{Line:maximal minimizer possi}). 
Finally, the complexity is dominated by the upper convex hull computation (Line~\ref{Line:upper convex hull}), which is $O(n\log n)$ in general but is only $O(n)$ here since the dual points are already sorted by x-coordinate (the slopes of $g_j$ are strictly monotone)\citep[Theorem 1.1]{de2008computational}. Hence, we obtain the following.
\begin{proposition}
    Algorithm~\ref{algo:UE_pos} runs in $O(n)$ time.
\end{proposition}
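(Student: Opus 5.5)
We bound the cost of each step of Algorithm~\ref{algo:UE_pos} separately and show each is linear. The algorithm has three parts. First, it builds the $n+1$ dual points $g^*_j=(j-n-1,-\pi_j)$. Second, it computes their upper convex hull (Line~\ref{Line:upper convex hull}). Third, it traverses the hull from right to left, recording breakpoints, smallest minimizing indices and maximal minimizers (Lines~\ref{Line:slope edge upper hull}--\ref{Line:maximal minimizer possi}). Building the dual points is clearly $O(n)$, since each point uses one array access and a constant number of arithmetic operations. The work is therefore to show the hull step and the traversal are each $O(n)$.

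\textbf{Hull step.} The $x$-coordinates $j-n-1$ are strictly increasing in $j$, so the points arrive already sorted by abscissa. No $O(n\log n)$ sorting step is needed. We then run the incremental (Andrew's monotone chain) upper-hull scan, as in \citep[Theorem 1.1]{de2008computational}. Each point is pushed onto the stack exactly once. Each turn test that fails pops a point that is never pushed again. Hence there are at most $n+1$ pushes, at most $n+1$ pops, and $O(n)$ constant-time orientation tests in total.

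\textbf{Traversal.} The upper hull has at most $n+1$ vertices and at most $n$ edges. Each breakpoint is the slope of one edge, computed in $O(1)$. Each smallest minimizing index is an endpoint of a hull edge, so it is read off in $O(1)$. Collinear points would otherwise create ties. To handle them, we remove collinear points during the scan, or equivalently keep the leftmost index on each supporting line; that index is the smallest $j$, which matches the maximal-cardinality requirement. Ties are thus resolved with no extra search.

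\textbf{Output and main obstacle.} Each maximal minimizer $\{j,\ldots,n\}$ is recorded implicitly by its index $j$, in $O(1)$ per hull vertex. Writing the optimal $p$ via \eqref{eq:optimal_nagano} takes $O(n)$ total, since the blocks $S_j\setminus S_{j-1}$ partition $\Omega$. The main obstacle is output representation. If the chain sets were written out explicitly, the sizes would sum to $O(n^2)$ and the bound would fail. I would therefore argue that successive sets differ by contiguous index blocks, so storing their indices, or equivalently assigning $p_i$ block by block, is linear. Summing the three parts gives $O(n)$.
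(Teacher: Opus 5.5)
Your proof is correct and follows the paper's argument: the running time is dominated by the upper-hull computation, which is linear because the dual points $(j-n-1,-\pi_j)$ already come sorted by abscissa, so Andrew's scan needs no sorting step. Your added care on two points fills in details the paper leaves implicit: removing collinear points so that each edge's left endpoint is the smallest minimizing index, and representing each $S_j$ by its index $j$, since writing the sets out explicitly could cost $O(n^2)$.
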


\begin{algorithm}
\caption{Solve $\min_{A\subseteq \Omega}\Pi(A)-\alpha |A|~\forall \alpha$}
\label{algo:UE_pos}
\KwIn{possibility distribution $\pi$ s.t $1= \pi_1 \geq \pi_2 \geq \ldots \geq \pi_n >  \pi_{n+1}=0$.}

Form points $p^j = (j-n-1, -\pi_j)$, each has attribute $p^j.\mathrm{idx}=j$. \;
Compute upper hull vertices (from right to left) of the set $\{p^j: 1\leq j \leq n+1\}$ and store in array $V$ \label{Line:upper convex hull}\; 
$N$= length($V$); $\mathrm{B}=[~]$; $\mathrm{I} = [~]$\;
\For{$i= 1$ \KwTo $N-1$}{
    % $p = V[k]$ and  $q = V[k+1]$\;
    Add slope of edge joining $V[i]$ and $V[i+1]$ to $\mathrm{B}$ \label{Line:slope edge upper hull} \;
     Add $V[i+1].\mathrm{idx}$ to $\mathrm{I}$ \label{Line:smallest indices}\;
}
\For{$j \in \mathrm{I}$}
{
$S_j= \{j, \ldots,n\}$ \label{Line:maximal minimizer possi}\;
}
\Return{ $\mathrm{B}$ and $\{S_j\}$\;}

\end{algorithm}

\begin{example}\label{example:poss distribution}
    Consider the distribution $\pi= (1,0.6, 0.3)$. We have $g_1(\alpha)=1-3\alpha, g_2(\alpha)=0.6-2\alpha, g_3(\alpha)=0.3-\alpha$, and $g_4(\alpha)=0$. 
    The graph of $g=\min_{i}g_i$ is illustrated in Figure \ref{fig:min_gi} in which the intersections of active lines are marked in black. 
    At the first breakpoint $\alpha=0.3$, the smallest $j$ satisfying $g_j=g$ is $j=2$.
    At the second (and last) breakpoint $\alpha=0.4$, the smallest $j$ satisfying $g_j=g$ is $j=1$. From these breakpoints, we read the chain $\{S_j\}$ associated to $\min_{A\subseteq \Omega}\Pi(A)-\alpha |A|$ as: 
         $\forall \alpha \in [0.3, 0.4)$, the maximal minimizer is $\{2,3\}$;
         $\forall \alpha \in [0.4, +\infty)$, the maximal minimizer is $\{1, 2,3\}$.
         Of course,
     $\forall \alpha \in [0, 0.3)$, the maximal minimizer is $\emptyset$.

    Let $g^*_i$ be the dual points of lines $g_i$. Figure \ref{fig:upper_convex_hull} illustrates the convex hull of these points in which the upper convex hull is marked in red. As $g^*_3$ is not a vertex, the edges in this upper hull from right to left are $(g^*_4, g^*_2)$ and $(g^*_2, g^*_1)$, which correspond one to one with the intersections of active lines of $g$ (see Figure \ref{fig:min_gi}). We also see that the slopes of edges $(g^*_4, g^*_2)$ and  $(g^*_2, g^*_1)$ are 0.3 and 0.4 matching exactly the breakpoints of $g$ calculated before.

\end{example}

\begin{figure}[H]
    \centering
    \begin{subfigure}[b]{0.48\linewidth}
        \centering
    \includegraphics[width=1.1\linewidth]{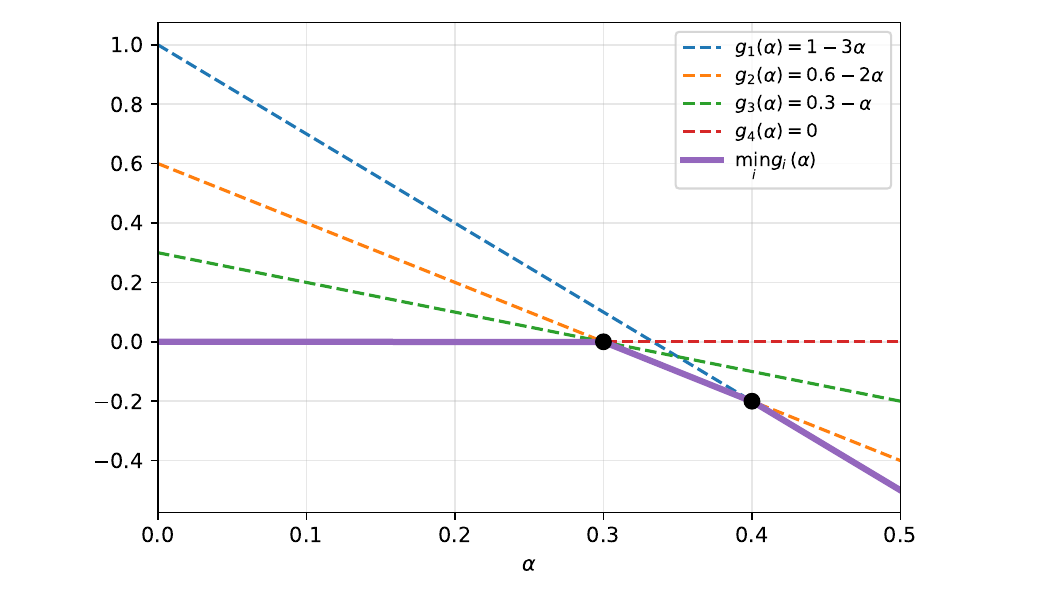}
    \caption{Lower envelope of lines $g_j$}
    \label{fig:min_gi}
    \end{subfigure}
    \hfill
    \begin{subfigure}[b]{0.48\linewidth}
         \centering
    \includegraphics[width=0.95\linewidth]{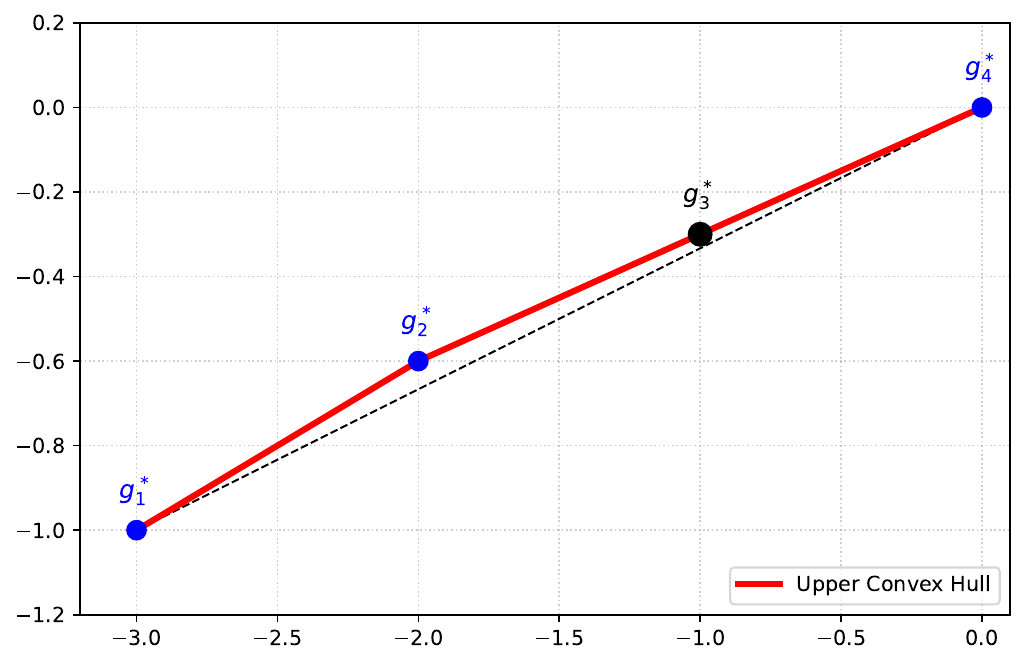}
    \caption{Upper convex hull of dual points $g^*_j$}
    \label{fig:upper_convex_hull}
    \end{subfigure}
    
    \caption{Illustration of Example \ref{example:poss distribution}}
    
    \label{fig:o}
\end{figure}

\subsection{Probability intervals}

In this case, computing the upper entropy becomes solving
\begin{equation}
\max_{p \in \mathcal{P}} -\sum_{i=1}^n p_i \log p_i,
\label{prob:UE_interval}
\end{equation}
where $\mathcal{P}= \{p \in \Delta_n: l_i \leq p_i\leq u_i~\forall i\}$.

\citet{abellan2003maximum} designed an algorithm for solving \eqref{prob:UE_interval} with $O(n^2)$ complexity. We now present an algorithm with better complexity. By using the KKT conditions \citep{boyd2004convex}, the optimal solution is characterized as follows. 
\begin{proposition}\label{prop:KKT_characterize_interval}
    $p$ is optimal of Problem \ref{prob:UE_interval} if and only if there exists $x$ such that $\sum_{i=1}^n\min\{\max\{x, l_i\}, u_i\}=1$ and $p_i = \min\{\max\{x, l_i\}, u_i\}~\forall i$.
\end{proposition}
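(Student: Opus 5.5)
We treat Problem \eqref{prob:UE_interval} as minimizing the convex function $f(p)=\sum_i p_i\log p_i$ (with $0\log 0=0$) over the polytope $\mathcal{P}=\{p:\ l_i\le p_i\le u_i,\ \sum_i p_i=1\}$. The objective is strictly convex and the constraints are affine, so Slater-type qualification is automatic (linear constraints). The KKT conditions are therefore necessary and sufficient, and the optimizer is unique. The plan is to write the Lagrangian with a multiplier $\lambda$ for the equality and multipliers $\underline{\eta}_i,\overline{\eta}_i\ge 0$ for the box constraints, and to read off the coordinatewise structure.

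\textbf{Stationarity.} The condition reads $\log p_i+1+\lambda-\underline{\eta}_i+\overline{\eta}_i=0$. Set $x:=e^{-1-\lambda}>0$ and split into cases using complementary slackness:
\begin{itemize}
\item If $l_i<p_i<u_i$, both box multipliers vanish and $p_i=x$.
\item If $p_i=l_i<u_i$, then $\overline{\eta}_i=0$ and $\underline{\eta}_i\ge0$, giving $p_i\ge x$, i.e.\ $x\le l_i=p_i$.
\item If $p_i=u_i>l_i$, symmetrically $x\ge u_i=p_i$.
\item If $l_i=u_i$, the coordinate is fixed.
\end{itemize}
In every case $p_i=\min\{\max\{x,l_i\},u_i\}$, the projection of $x$ onto $[l_i,u_i]$, and feasibility gives $\sum_i p_i=1$. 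This proves the ``only if'' direction.

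\textbf{Sufficiency.} Given $x$ with $\sum_i\min\{\max\{x,l_i\},u_i\}=1$, define $p$ accordingly, so $p\in\mathcal{P}$. For $x>0$, set $\lambda=-1-\log x$ and choose $\underline{\eta}_i=\max\{0,\log l_i-\log x\}$ and $\overline{\eta}_i=\max\{0,\log x-\log u_i\}$. A direct check shows these are nonnegative, complementary-slack and stationary, so $p$ is optimal by convexity.

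\textbf{Main obstacle: the boundary $p_i=0$.} The gradient $\log p_i$ blows up at $0$, so KKT in the naive form fails when some $l_i=0$ and $p_i=0$ could occur, or when $x\le 0$. I would handle this by a direct first-order variational argument instead of the multipliers. At an optimum, suppose $p_j=0<u_j$ and some other coordinate $p_k>l_k$. Moving mass $\varepsilon$ from $k$ to $j$ changes $f$ by $\varepsilon\log\varepsilon+O(\varepsilon)<0$ for small $\varepsilon$, contradicting optimality. Hence $p_j=0$ forces $u_j=0$, or $p_k=l_k$ for all $k\neq j$; in both cases the projection formula holds with $x=0$.

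Conversely, for $x\le0$ the projection formula gives $p=l$. Since $\sum_i l_i=1$, the set $\mathcal{P}=\{l\}$ is a singleton and $p$ is trivially optimal. I would therefore restrict the KKT argument to the face where $p_i>0$, where $f$ is differentiable, and treat the zero coordinates by this exchange argument. Equivalently, I would use the variational inequality $\nabla f(p)^\top(q-p)\ge0$ on the relative interior, extended by continuity. Monotonicity of $x\mapsto\sum_i\min\{\max\{x,l_i\},u_i\}$ also shows that the equation in $x$ has a solution, and that $p$ is the same for every such solution.
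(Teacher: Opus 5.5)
Your argument is correct, and its core is the paper's: your substitution $x=e^{-1-\lambda}$ is the paper's $\lambda=-1-\log x$. Your multipliers $\underline{\eta}_i=\max\{0,\log(l_i/x)\}$ and $\overline{\eta}_i=\max\{0,\log(x/u_i)\}$ are exactly the paper's case-by-case $\alpha_i,\beta_i$.

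Where you differ is in how the two directions and the boundary are handled. The paper first reduces to $u_i>0$ and $\sum_i l_i<1<\sum_i u_i$. It then asserts without proof that the optimum is strictly positive. It obtains some $x\in(0,\max_i u_i)$ solving the equation by the intermediate value theorem, and checks that the resulting $p$ is a KKT point. The ``only if'' direction is never argued explicitly; it follows implicitly from uniqueness of the maximizer of the strictly concave entropy.

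You instead derive the projection form directly from stationarity and complementary slackness at an arbitrary optimum. This needs neither the existence of $x$ nor uniqueness. Your $\varepsilon\log\varepsilon$ exchange argument also supplies a proof of the positivity claim the paper takes for granted. And your $x\le 0$ branch handles the degenerate case $\sum_i l_i=1$, which the paper sets aside by a WLOG.

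One slip needs fixing. When $p_j=0$ because $u_j=0$, the formula does not hold ``with $x=0$'' for the whole vector, since the other coordinates need not equal $l$. What is true is that coordinate $j$ satisfies $\min\{\max\{x,0\},0\}=0$ for every $x$. So you should discard such coordinates, as the paper does, and use the $x>0$ produced by KKT on the remaining positive face.

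The same discarding is needed in your sufficiency step. There $\overline{\eta}_i=\max\{0,\log x-\log u_i\}$ is infinite when $u_i=0$. After removing those coordinates, every remaining $p_i$ is positive, so the gradient inequality applies.
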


Let $f_i(x) := \min\{\max\{x, l_i\}, u_i\} $ or explicitly  $$f_i(x)=
\begin{cases}
l_i, & x \leq l_i,\\
x, & l_i< x < u_i\\
u_i, & x \geq u_i.
\end{cases}$$
From Proposition \ref{prop:KKT_characterize_interval}, whose technical proof is in the Appendix~\ref{app:prob_int}, it suffices  to solve $f(x)=1$ where $f(x):= \sum_{i=1}^n f_i(x)$. Because each $f_i$ is non-decreasing and piecewise affine, the same holds for $f$. 
Let $a := \min_{i}l_i$ and $b := \max_{i}u_i$, then $f(a) = \sum_{i=1}^n l_i$ and $f(b) = \sum_{i=1}^n u_i$. As $f$ is continuous and $\sum_{i=1}^n l_i \leq 1 \leq \sum_{i=1}^n u_i$, from the intermediate value theorem there is a solution to $f(x)=1$ in the interval $[a, b]$.  Figure \ref{fig:plot_f} illustrates a plot of $f$. 
% Note also that $f$ has at most $2n$ breakpoints, which are (usually strictly) contained in $\cup_{i=1}^n\{l_i, u_i\}$.
\begin{figure}[H]
\centering
\includegraphics[width=0.5\linewidth]{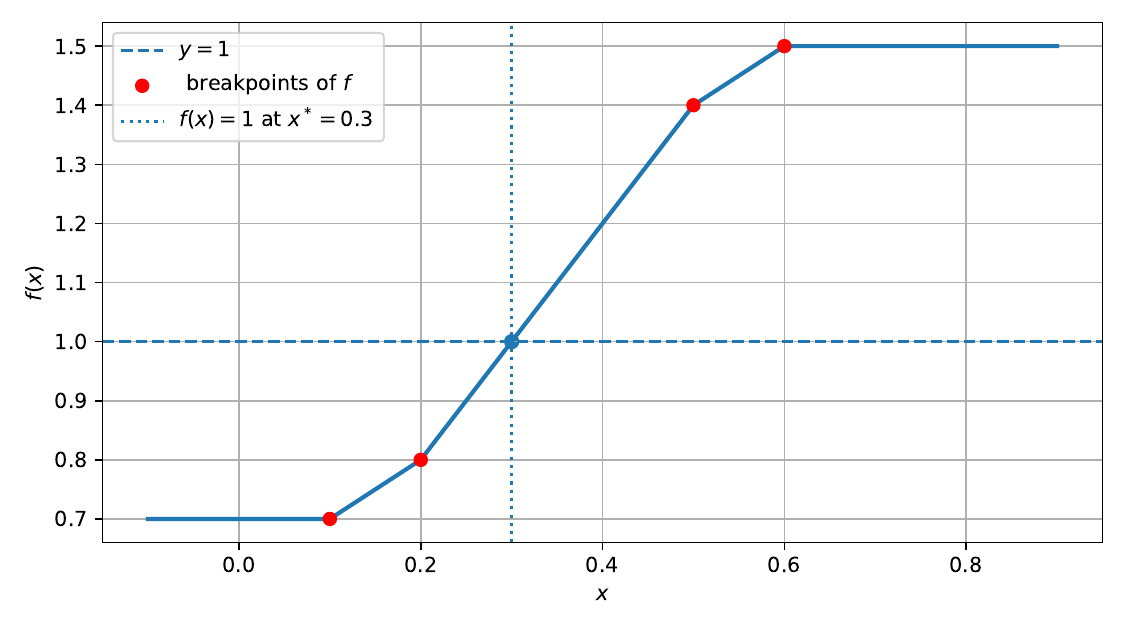}
\caption{Plot of $f(x)$ with $[l_1, u_1]=[0.1, 0.4], [l_2, u_2]=[0.4, 0.5]$ and $[l_3, u_3]=[0.2, 0.6]$.}
% The breakpoints of $f$ are colored in red.
\label{fig:plot_f}
\end{figure}

Because $f$ is non-decreasing, we can solve $f(x)=1$ by binary search: at each iteration, set $x=\frac{a+b}{2}$; if $f(x)<1$, continue on $[x,b]$, otherwise continue on $[a,x]$, and repeat. This procedure is already quite efficient. In fact, suppose we aim to solve $f(x)=1$ with accuracy $\epsilon$, i.e., $|f(x)-f(x^*)| \leq \epsilon$ where $x^*$ is a solution. Because $|f(x)-f(x^*)| \leq n |x-x^*|$, we stop the binary search once the interval width is less than $\frac{\epsilon}{n}$, which requires at most $\lceil\log_2\frac{n}{\epsilon}\rceil$ iterations (as $[a,b]\subseteq [0,1]$). For example, if $n=10^7$ and $\epsilon=10^{-10}$, this amounts to about $56$ iterations, each needing only an evaluation of $f$.

We can speed up the computation even further by combining binary search with the celebrated Newton method. Start with an initial guess $x_0$, until a solution is found, Newton’s method iteratively updates $x_k$ for $k=1,2,\ldots$ as
\begin{equation}\label{eq:Newton}
    x_{k} = x_{k-1} - \frac{f(x_{k-1})-1}{f'(x_{k-1})}.
\end{equation}
Note that $f$ is not differentiable at its breakpoints. However, this is not problematic since at a given $x$, if $f(x)<1$ (resp. $f(x) >1$), the solution lies to the right (resp. the left) of $x$, and we replace the derivative in (\ref{eq:Newton}) with the right derivative $f'_+(x)$ (resp. left derivative $f'_{-}(x)$). It is easy to see that $\forall i$, $(f'_{i})_+(x) = 1$ if $ l_i \leq x < u_i$ and $(f'_{i})_+(x) = 0$ otherwise. Summing these right derivatives, we obtain
$f'_+(x) = |\{i:  l_i \leq x < u_i \}|$. Similarly, $f'_-(x) = |\{i:  l_i  <x \leq u_i \}|$. Finally, the Newton step in (\ref{eq:Newton}) may step out of the current interval which contains the solution or be undefined due to a zero derivative. In these cases, we fall back to a binary-search step \citep[Chapter 9]{press1992numerical}. Wrapping up, the approach is described in Algorithm~\ref{algo:Newton-binary}. At worst, it may perform binary search only, and thus we obtain:
\begin{proposition}
     Algorithm \ref{algo:Newton-binary} runs in $O(n\log \frac{n}{\epsilon})$ time.
\end{proposition}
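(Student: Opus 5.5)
The argument has two parts: a per-iteration cost bound and an iteration-count bound. The approach is to show that each pass through the main loop of Algorithm~\ref{algo:Newton-binary} costs $O(n)$ arithmetic operations, and that the loop runs at most $O(\log\frac{n}{\epsilon})$ times. Multiplying the two gives $O(n\log\frac{n}{\epsilon})$.

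\textbf{Setup cost.} Before the loop, computing $a=\min_i l_i$, $b=\max_i u_i$, $f(a)$ and $f(b)$ takes $O(n)$.

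\textbf{Per-iteration cost.} Each iteration evaluates $f(x)=\sum_i\min\{\max\{x,l_i\},u_i\}$ at the current point, which is $O(n)$. If the Newton step is attempted, it also needs one of the one-sided derivatives
\[
f'_+(x)=|\{i: l_i\le x<u_i\}|, \qquad f'_-(x)=|\{i: l_i<x\le u_i\}|,
\]
and each is obtained in one $O(n)$ scan over the intervals. Forming the candidate point~\eqref{eq:Newton} is $O(1)$. So is the safeguard: checking whether the derivative is zero or the candidate falls outside the current bracket, and replacing it by the midpoint if so. Updating the bracket $[a,b]$ according to the sign of $f(x)-1$ is also $O(1)$. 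Hence each iteration costs $O(n)$.

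\textbf{Iteration count.} The key invariant is that the bracket always contains a solution $x^*$ and never grows. This holds because $f$ is non-decreasing: the rule "keep $[x,b]$ if $f(x)<1$, else keep $[a,x]$" preserves a root, and the safeguard guarantees every accepted point lies inside the bracket. The main obstacle is that a Newton step, unlike a bisection step, need not halve the bracket, so the naive argument fails if many Newton steps are accepted.

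To handle this, I would formalize the worst case that the paper alludes to ("At worst, it may perform binary search only"). The algorithm enforces sufficient shrinkage in the standard safeguarded-Newton way of \citep[Chapter 9]{press1992numerical}: a Newton step is accepted only if the bracket shrinks at least as fast as bisection would, for instance by requiring the new width to be at most half the previous one, and otherwise it falls back to bisection. Under that rule the width is at most halved every iteration, starting from $b-a\le 1$.

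\textbf{Stopping and conclusion.} The loop stops once the width drops below $\frac{\epsilon}{n}$, and since $|f(x)-f(x^*)|\le n|x-x^*|$, any point of the final bracket is then $\epsilon$-accurate. With at-least-halving, this happens after at most $\lceil\log_2\frac{n}{\epsilon}\rceil$ iterations. The loop may also stop earlier because $f(x)=1$ exactly, which happens when a Newton step lands on an affine piece containing the root. Combining the $O(\log\frac{n}{\epsilon})$ iterations with the $O(n)$ per-iteration cost and the $O(n)$ setup gives the claimed $O(n\log\frac{n}{\epsilon})$ bound.
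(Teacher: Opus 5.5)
Your per-iteration cost analysis and the final count $\lceil\log_2\frac{n}{\epsilon}\rceil$ are the same as the paper's. However, the step linking them is not a property of Algorithm~\ref{algo:Newton-binary}.

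That algorithm has only two safeguards. It bisects when $d=0$ or when the Newton point $x'$ falls outside $(a,b)$. Every other Newton point is accepted, however little it shrinks the bracket. The rule you invoke, accepting a Newton step only if the width at least halves, is not in the algorithm. As you state it, it also cannot be checked without first evaluating $f(x')$, since the sign of $f(x')-1$ decides which endpoint moves. So your invariant "the width is at most halved every iteration" is unjustified.

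The paper's own justification ("at worst, it may perform binary search only") silently makes the same assumption. You have reproduced its argument and made the hidden hypothesis explicit, but you have not closed it.

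This is not a technicality: with the $O(n)$-per-evaluation accounting that both you and the paper use, the bound fails for the algorithm as written. Take the following instance:
\begin{itemize}
\item $N$ disjoint intervals $I_k=[l_k,l_k+\ell]$ separated by flat gaps $g_k=l_{k+1}-l_k-\ell$, with $g_k=(N-\tfrac12-k)\ell$ for $k\ge1$;
\item the two degenerate coordinates $[0,0]$ and $[2l_1,2l_1]$;
\item $l_1$ chosen so that $\sum_i l_i=1-g_0$, where $g_0:=(N-\tfrac12)\ell$;
\item $\ell$ a small multiple of $N^{-3}$, so that every $I_k$ lies below $2l_1$.
\end{itemize}
Then $a=0$, $b=2l_1$, and the first iterate is $x=l_1$. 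At $x=l_k$ one has $d=1$ and $1-f(l_k)=g_{k-1}$. So the Newton point $l_k+g_{k-1}=l_{k+1}$ lies in $(a,b)$ and is accepted, while the residual drops by only $\ell$ per iteration.

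With $\epsilon=\ell/4$ the loop runs $N=\Theta(n)$ times, which costs $\Theta(n^2)$. By contrast, $n\log\frac{n}{\epsilon}=O(n\log n)$ here.

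To obtain a correct statement you must change the algorithm. For example, force a bisection every other iteration, or evaluate $f(x')$ and fall back to the midpoint whenever the bracket has not halved; either keeps the cost at $O(n)$ per iteration. You must also say explicitly that you are analyzing this modified algorithm. Your argument then goes through essentially verbatim.
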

% \begin{proof}
%     As in the worst case, it can perform binary search only.
% \end{proof}
We note that in practice, Algorithm \ref{algo:Newton-binary} is often considerably faster due to the Newton steps.

\begin{algorithm}[t]
\caption{Hybrid Newton--Binary Search}
\label{algo:Newton-binary}
\KwIn{ $n$-intervals $[l_i, u_i] \subseteq [0,1]$; tolerance $\epsilon$}
$a= \min_{i}l_i,~b= \max_{i}u_i$, $x= \frac{a+b}{2}$\;

\While{$|f(x)-1|>\epsilon$}{
\eIf{$f(x)-1<0$}{$a=x$ and $d= |\{i:  l_i \leq x < u_i \}|$}
{$b=x$ and $d= |\{i:  l_i  <x \leq u_i \}| $}
\eIf{$d=0$}{$x= \frac{a+b}{2}$\;
}{
$x^\prime= x - \frac{f(x)-1}{d}$\;
    \leIf{$x^\prime \in (a,b)$}{
        $x= x^\prime$
    }{
        $x= \frac{a+b}{2}$
    }
}
}
\Return{$p_i=\min\{\max\{x, l_i\}, u_i\}~\forall i$}

\end{algorithm}

\section{Computing upper entropy via convex optimization}\label{sec:FW_algo}
The bottleneck of Algorithm \ref{algo:decomposition} lies in solving $O(n)$ SFM since  algorithms for solving a general SFM are slow due to their high-order polynomial complexity. Hence, for general 2-monotonicity, it is worthwhile to explore efficient approximate ways to compute upper entropy. 

We now adopt the well-known Frank--Wolfe (FW) algorithm \citep{Bachsubmodular13} to solve Problem \eqref{prob:upper_entropy}. Denote the entropy function by $H(p):= -\sum_{i=1}^np_i \log p_i$. In this method, we start with a  ${p}^0 \in \mathcal{P}(\mu) $. 
At each iteration $k= 0,1,2, \ldots$, we solve the so-called linear oracle
\begin{equation}\label{eq:FW_oracle}
    \max_{p \in \mathcal{P}(\mu) } \nabla H(p^k)^T (p-p^k),
\end{equation}
for which the maximum is attained at $v^k$. Since $\mathcal{P}(\mu)$ is convex, the segment between $p^k$ and $v^k$ lies in $\mathcal{P}(\mu)$, and the next iterate is taken on this segment as the point with the best objective, i.e., 
$p^{k+1} = (1-\gamma_k)p^k+\gamma_kv^k$ where 
\begin{equation}\label{eq:line_search}
    \gamma_k \in \argmax_{\gamma \in  [0,1]} \, H((1 - \gamma)p^k + \gamma v^k).
\end{equation}
If $p^*$ is the optimal probability then the concavity of $H$ ensures that $H(p^k) \leq H(p^*) \leq  H(p^k) + \nabla H(p^k)^T (v^k-p^k)$. Hence, the maximum value of (\ref{eq:FW_oracle}) (also called the \emph{duality gap}) serves as a stopping certificate: we terminate if it is smaller than a threshold.
As $H$ is not differentiable at $p$ when some $p_i=0$, we apply the algorithm by starting with an initial point $p^0 >0$ and restricting the line search (\ref{eq:line_search}) to $\gamma\in[0,1)$. Thus, the gradient is always defined. 
% This way, every iterate remains in the interior of the probability simplex. 
\begin{remark}\label{remark:initial_p0_FW}
    A necessary and sufficient condition for the existence of such $p^0>0$ is $\bar{\mu}(\{i\}) >0 ~\forall i$. Since $\bar{\mu}(\{i\})=0$ means $i$ is impossible (as $\bar{\mu}$ provides upper bounds for event probabilities), this condition makes all outcomes $i$ possible. Moreover, $p^0$ can be found in $O(n^2\mathrm{EO})$ (see Appendix \ref{proof: remark creating p0}). 
\end{remark}
The key step in the algorithm is solving \eqref{eq:FW_oracle}. Fortunately, $\mathcal{P}(\mu)$ coincides with the base polyhedron of $\bar{\mu}$ (see Remark \ref{remark:credalset=base poly}) and Problem \eqref{eq:FW_oracle} is linear in $p$. Hence, it is efficiently solvable due to the following result.
\begin{proposition}[\citep{fujishige2005submodular}]
    Let $w \in \mathbb R^n$ and $\phi$ be a permutation such that $w_{\phi(1)}\geq \ldots\geq w_{\phi(n)}$. Define $S_i := \{\phi(1), \ldots, \phi(i)\}~\forall i \in \{1, \ldots, n\}$ and  $S_0 = \emptyset$. If $\nu$ is a submodular function on $\Omega$, then a maximizer of $\max_{x\in B(\nu)}w^Tx$ is given as $x_i = \nu(S_i)-\nu(S_{i-1})~\forall i$.
\end{proposition}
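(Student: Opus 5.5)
The plan is to prove the greedy (Edmonds) characterization directly in two parts: first that the vector $x$ built from the chain $S_0\subset S_1\subset\cdots\subset S_n$ lies in $B(\nu)$, then that it beats every other point of $B(\nu)$ on $w^Tx$. Throughout I read the formula as $x_{\phi(i)} = \nu(S_i)-\nu(S_{i-1})$, i.e.\ the increment is assigned to the element $\phi(i)$ added at step $i$. Telescoping then gives $x(S_i)=\nu(S_i)$ for every $i$, and in particular $x(\Omega)=\nu(\Omega)$.

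\textbf{Feasibility.} Fix $A\subseteq\Omega$. For each $i$ with $\phi(i)\in A$, I compare the increment $\nu(S_i)-\nu(S_{i-1})$ with $\nu(A\cap S_i)-\nu(A\cap S_{i-1})$. The set $A\cap S_i$ equals $(A\cap S_{i-1})\cup\{\phi(i)\}$, and $A\cap S_{i-1}\subseteq S_{i-1}$. Submodularity \eqref{eq:submodular_defintion}, applied to $B=A\cap S_i$ and $S_{i-1}$ (whose union is $S_i$ and whose intersection is $A\cap S_{i-1}$), gives the diminishing-returns inequality
\[
\nu(S_i)-\nu(S_{i-1}) \le \nu(A\cap S_i)-\nu(A\cap S_{i-1}).
\]
For $\phi(i)\notin A$ the right-hand difference is zero and contributes nothing. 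Summing over $i$ with $\phi(i)\in A$, the right-hand side telescopes to $\nu(A)-\nu(\emptyset)=\nu(A)$. Hence $x(A)\le\nu(A)$, and together with $x(\Omega)=\nu(\Omega)$ this gives $x\in B(\nu)$. This is the only place submodularity is used, and I expect it to be the step requiring the most care, mainly in handling the intersections.

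\textbf{Optimality.} Take any $y\in B(\nu)$ and apply Abel summation along the order $\phi$:
\[
w^Ty = \sum_{i=1}^{n-1}\bigl(w_{\phi(i)}-w_{\phi(i+1)}\bigr)\,y(S_i) + w_{\phi(n)}\,y(\Omega).
\]
Each coefficient $w_{\phi(i)}-w_{\phi(i+1)}$ is nonnegative by the choice of $\phi$. Moreover $y(S_i)\le\nu(S_i)=x(S_i)$ and $y(\Omega)=\nu(\Omega)=x(\Omega)$, and the last coefficient $w_{\phi(n)}$ may have any sign because it multiplies an equality. Therefore $w^Ty\le w^Tx$, so $x$ is a maximizer, which proves the proposition.
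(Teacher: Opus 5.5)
Your proof is correct and complete. Submodularity together with $\nu(\emptyset)=0$ gives the diminishing-returns inequality, which yields $x\in B(\nu)$, and the Abel summation with nonnegative coefficients $w_{\phi(i)}-w_{\phi(i+1)}$ (plus the equality on $\Omega$) yields optimality. Reading the formula as $x_{\phi(i)}=\nu(S_i)-\nu(S_{i-1})$ is the right interpretation, and it matches how the paper uses the result in its appendix on constructing $p^0$.

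The paper gives no proof of its own and simply cites \citep{fujishige2005submodular}. Your argument is the standard Edmonds greedy proof. The Abel-summation step amounts to weak duality with the dual certificate supported on the chain $S_1\subset\cdots\subset S_n$.
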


\section{Experiments}\label{sec:expe}
 Our experiments are conducted on a laptop with 16GB RAM and Intel Core i9 where we code in Python with NumPy package for efficient array operations \citep{harris2020array}. Our code is available in the supplementary materials.
\subsection{2-monotonicity}
To construct a 2-monotone $\mu$, we do as follows \citep{bednarski1981solutions}: let $f:[0,1] \to [0,1]$ be any convex function where $f(0)=0$ and $p^*$ be any probability on $\Omega$; define $ \mu(\Omega)=1,~\mu(A) = f(p^*(A)) ~\forall A\neq \Omega.$ We have chosen $p^*$ as a vector sampled from a Dirichlet distribution where all concentration parameters are 1. Our choices of $f$ and sizes of $\Omega$ are listed in Table \ref{tab:FW_resolution}. 

We implement the FW algorithm for this type of 2-monotonicity. We first generate the initial point $p^0 >0$ in the credal set which takes $O(n^2\mathrm{EO})$ (see Remark \ref{remark:initial_p0_FW}). The resolution is then carried out starting from $p^0$. At any iteration, we obtain an approximate entropy ${H}_{a}$ and a $\mathrm{gap}$ for which the true entropy $H^*\in [H_{a}, H_{a}+ \mathrm{gap}]$. Hence, the relative error $\frac{H^*-H_{a}}{H^*}$ is bounded by $\frac{\mathrm{gap}}{H_{a}+ \mathrm{gap} }$ and we stop once this bound is less than 0.1\%. Table \ref{tab:FW_resolution} reports the average running times (in seconds) for initial-point generation and the subsequent resolution, obtained by repeating each $(f, |\Omega|)$ experiment with five random seeds used to construct $\mu$.
% $\frac{H^*-H_{a}}{H^*}$$=1 - \frac{H_{a}}{H^*}\leq \frac{\mathrm{gap}}{H_{a}+ \mathrm{gap} }$. 
From Table \ref{tab:FW_resolution}, we observe that the approach is quite efficient, yet most of the computation time is spent generating the initial point $p^0 >0$ rather than the resolution. This is because the former has quadratic complexity, making it the bottleneck as $|\Omega|$ grows. 

Therefore, when the space is large, we may adopt the workaround of optimizing the smooth surrogate $\tilde{H}(p):= -\sum_{i=1}^n p_i \log(p_i+\epsilon)$ for a fixed, sufficiently small $\epsilon>0$. The gap between the optimal values of $H$ and $\tilde{H}$ is bounded by $n\epsilon$ \cite[Lemma A.1]{hazan2019provably}. With this surrogate, we can start from any feasible point in the credal set, which is generated in $O(n)$ time using a single permutation on $\Omega$ (see Appendix \ref{proof: remark creating p0}). The results in Table \ref{tab:FW_resolution_surrogate} show that this approach is suitable for large spaces, e.g., $|\Omega|=5\times 10^5$. 

In general, this approximation approach is likely to outperform in efficiency the exact Algorithm~\ref{algo:decomposition}, mostly because solving the SFM problems will require polynomial methods whose exponent will be too high. We would therefore recommend it for very high values of $|\Omega|$, for which Algorithm~\ref{algo:decomposition} will struggle. 

\begin{table}[H]
\centering
\caption{Initial point construction and resolution times (s).}
\label{tab:FW_resolution}
\begin{tabular}{l >{\small}r >{\small}r >{\small}r}
\toprule
 $f$
& \multicolumn{1}{c}{$|\Omega|$}
& \multicolumn{1}{c}{\shortstack{Create $p^0$}}
& \multicolumn{1}{c}{\shortstack{Resolution}} \\
\midrule
\multirow{3}{*}{$x^2$} & 5000  & $0.284 \pm 0.006 $ & $0.013 \pm 0.001$ \\
                      & 10000  & $1.035 \pm 0.018$ & $0.024 \pm 0.001 $ \\
                      & 20000 & $3.961 \pm 0.072$ & $ 0.206\pm 0.027$ \\
\midrule
\multirow{3}{*}{$1 - \sqrt{1-x}$} & 5000  & $0.307 \pm 0.003$ & $0.009 \pm 0.001$ \\
                      & 10000  & $1.149 \pm 0.016$ & $0.017 \pm 0.001$ \\
                      & 20000 & $4.333 \pm 0.014$ & $0.147 \pm  0.014$ \\
\midrule
\multirow{3}{*}{$\frac{\mathrm{e}^{2x}-1}{\mathrm{e}^2-1}$} & 5000  & $0.397 \pm 0.003$ & $0.008 \pm 0.001$ \\
                      & 10000  & $1.481 \pm 0.010$ & $0.015 \pm 0.001$ \\
                      & 20000 & $5.713 \pm 0.019$ & $ 0.139 \pm 0.010$ \\
\bottomrule
\end{tabular}
\end{table}

\begin{table}[t]
\centering
\caption{Initial point construction and resolution times (s) for $|\Omega|=5\times 10^5$ with the surrogate.}
\label{tab:FW_resolution_surrogate}
\begin{tabular}{l >{\small}r >{\small}r}
\toprule
 $f$
% & \multicolumn{1}{c}{$|\Omega|$}
& \multicolumn{1}{c}{\shortstack{Create $p^0$}}
& \multicolumn{1}{c}{\shortstack{Resolution}} \\
\midrule
{$x^2$}   & $0.010 \pm 0.001 $ & $6.327 \pm 0.291$ \\
\midrule
{$1 - \sqrt{1-x}$}   & $0.012 \pm 0.001$ & $4.326 \pm 0.088$ \\
                     
\midrule
{$\frac{\mathrm{e}^{2x}-1}{\mathrm{e}^2-1}$}   & $ 0.019 \pm 0.001$ & $3.617 \pm 0.074$ \\
                     
\bottomrule
\end{tabular}
\end{table}

\subsection{Probability intervals}
To ensure a nonempty credal set, we construct $n$ random intervals $[l_i, u_i]$ as follows. First, we draw all $u_i$ independently and uniformly from $(0, 1]$ and compute $\sum_{i=1}^n u_i$. If this sum is less than 1, we discard the draw and resample. Second, each $l_i$ is drawn uniformly from $[0, u_i)$. Finally, we rescale  $l_i$ so that $\sum_{i=1}^n l_i = 1 - \mathrm{margin}$ for a fixed $0 <\mathrm{margin} <1$. We compare  Algorithm \ref{algo:Newton-binary} against the one in \citep{abellan2003maximum} and report in Table \ref{tab: newton vs AM} the average running times (in seconds) over five random seeds for each $(\text{margin}, |\Omega|)$ pair.

As mentioned earlier, Algorithm \ref{algo:Newton-binary} has better complexity than Abellán \& Moral's algorithm ($O(n\log \frac{n}{\epsilon})$ vs. $O(n^2)$). From Table \ref{tab: newton vs AM}, we also observe a significant improvement in the experiment. Moreover, the results suggest that their algorithm is influenced not only by $|\Omega|$ but also by the size of the feasible set: for a fixed $|\Omega|$, increasing $\mathrm{margin}$ enlarges the credal set and their algorithm struggles more. Our algorithm, in contrast, appears to depend only on $|\Omega|$.

\begin{table}[H]
\centering
\caption{Runtimes (s) of Algorithm \ref{algo:Newton-binary} vs. Abellán \& Moral}
\label{tab: newton vs AM}
\begin{tabular}{l >{\small}r >{\small}r >{\small}r}
\toprule
 $\mathrm{margin}$
& \multicolumn{1}{c}{$|\Omega|$}
& \multicolumn{1}{c}{\shortstack{Algorithm \ref{algo:Newton-binary}}}
& \multicolumn{1}{c}{\shortstack{Abellán \& Moral}} \\
\midrule
\multirow{2}{*}{$0.1$} & $10^7$  & $0.305\pm 0.002 $ & $12.906\pm 0.036$ \\
                      & $10^8$ & $3.133\pm 0.042 $ & $15.699\pm 0.144 $ \\

\midrule
\multirow{2}{*}{$0.2$} & $10^7$  & $0.265\pm 0.003$ & $19.375\pm 0.047$ \\
                      & $10^8$ & $2.729\pm 0.037$ & $19.589\pm 0.245$ \\
 \midrule
\multirow{2}{*}{$0.3$} & $10^7$  & $0.281\pm 0.015$ & $25.054\pm 0.060$ \\
                      & $10^8$ & $2.843\pm 0.147 $ & $27.069\pm 0.179$ \\

\bottomrule
\end{tabular}
\end{table}

\section{Conclusion}
In this paper, we revisited the problem of computing upper entropy for 2-monotone lower probabilities, from theoretical and practical viewpoints. Upper entropy indeed plays an important role when it comes to credal uncertainty quantification, hence the interest to fully explore its computational aspects. 

We analyzed  Abellán \& Moral's algorithm, well-known in imprecise probability but whose exact complexity was previously unknown. In contrast with previous claims saying that it was difficult, we showed that it was strongly polynomial, and proposed an improved algorithm based on submodular optimization. For important special cases, namely belief functions, probability intervals and possibility distributions, we also developed algorithms with notable improvements over existing approaches in the literature. Along with those improved methods, our results give a rather comprehensive picture of how to compute upper entropies for practical credal sets routinely used in applications. 

Also, as the algorithm for computing upper entropy in general cases has high-degree polynomial complexity, we proposed an alternative approach using the classic Frank-Wolfe method. Initial experiments indicates that this alternative is practical and achieves good precision quickly, thus offering a highly efficient alternative to exact methods. In the future, we aim to perform much more comprehensive experiments. In particular, for belief functions, it is interesting to compare the exact resolution approach using maximum-flow algorithms with the approximate one via Frank-Wolfe.

A more general question to which this paper contributes is to know to which extent classical results issued from submodular optimization can help to solve commonly encountered computational issues for credal sets, and more specifically those induced by 2-monotone lower probabilities? One could think of other robust versions of classical uncertainty quantification tools, such as KL divergences, Wasserstein metrics. Another interesting topic to explore under this algorithmic and computational aspect is optimal transport~\citep{lorenzini2025choquet,caprio2025optimal}.

\bibliography{bib}

\newpage

\onecolumn

% \title{Upper entropy for 2-monotone lower probabilities\\(Supplementary Material)}
% \maketitle
\title{Upper entropy for 2-monotone lower probabilities\\(Appendix)}
\maketitle

\appendix

% \section{Additional Proofs}
\section{Proof of Lemma \ref{Lemma:1}}\label{appendix: proof of lemma 1}

Before proving this, we remark that if $\mu$ is supermodular, the set function $\mu_{\lambda}(A):= \mu(A) - \lambda |A|$ is also supermodular as $|A| + |B| = |A\cap B| +|A\cup B|~\forall A,B$.
\paragraph{Part 1:} Finding $A\in \argmax_{\emptyset \neq B \subseteq \Omega} \frac{\mu(B)}{|B|}$ by solving $O(n)$ SFM.

For this part, we adapt the idea described in \citep[Sec. 4.1]{fleischer2003push}.

\begin{proof}
Define a function $g(\lambda) := \max_{\emptyset \neq B \subseteq \Omega} \mu(B)-\lambda|B|$. We claim that 
\begin{equation}\label{eq:root DSS}
  \lambda^*:=\max_{\emptyset \neq B \subseteq \Omega} \frac{\mu(B)}{|B|}   \Leftrightarrow g(\lambda^*)=0.
\end{equation} 
Indeed, if $\max_{B\neq \emptyset} \frac{\mu(B)}{|B|} > \lambda$ then $\max_{B\neq \emptyset} \frac{\mu(B)-\lambda|B|}{|B|}>0$, and thus it follows that $g(\lambda) >0$ because $|B| >0$. Hence, we need to solve $g(\lambda)=0$, for which the Dinkelbach’s method \cite{dinkelbach1967nonlinear} will be employed. Note that $\lambda^*\in$ $(0,1]$. Dinkelbach’s method then constructs the sequence $\{\lambda_t\}$ starting from $\lambda_0=0$, evaluates $g(\lambda_t)$, and updates $\lambda_t$ until it hits a solution of $g$ as
\begin{enumerate}
    \item Find $B_t\in \operatorname{argmax}_{B \neq \emptyset}\mu(B)-\lambda_t|B|$. Evaluate $g(\lambda_t)=\mu(B_t)-\lambda_t|B_t|$.
    \item If $g(\lambda_t)=0$, then $\lambda_t$ is a solution and $B_t$ is optimal. 
    \item \label{it: Dinkelback_update} Otherwise, update $\lambda_{t+1}:=\lambda_t +\frac{g(\lambda_t)}{|B_t|}= \frac{\mu(B_t)}{|B_t|}  $.
\end{enumerate}
Define $M_k:=\max_{|B|=k}\mu(B)$ for $k = 1, \ldots,n$. At each iteration, because $|B_t|$ is a maximizer, if $|B_t|=k$, it follows that $\mu(B_t) = M_k$, and thus $\lambda_{t+1}=\frac{M_k}{k}$. Consequentiality, the sequence $\{\lambda_t\}$ can only take values in the finite set $\{\frac{M_1}{1}, \ldots, \frac{M_n}{n}\}$. Since $\{\lambda_t\}$ strictly increases until optimality with $g(\lambda_t)=0$, the loop terminates in at most $n$ iterations. Finally, at each iteration we solve $\max_{B}\mu(B)-\lambda_t|B|$ which is an SFM as $\mu$ is supermodular. In total, we need to solve $O(n)$ SFM. 
\end{proof}

\paragraph{Part 2:} Finding the maximizer $A$ of maximum size by solving $O(n)$ SFM.
\begin{proof}
    Because the set of maximizers of a supermodular function is closed under union, after finding $\lambda^*$ (see \eqref{eq:root DSS}) via Dinkelbach’s method, we simply take the union of all maximizers of $\mu_{\lambda^*}$ where $\mu_{\lambda^*}(B):= \mu(B) - \lambda^*|B|~\forall B$. Observe that $i\in \Omega$ belongs to a maximizer of $\mu_{\lambda^*}$ if and only if
    \begin{equation}\label{eq:i_in_a_maximizer}
        \max_{B\subseteq \Omega\setminus\{i\}}\mu(B \cup \{i\})- \lambda^*|B\cup \{i\}|=0.
    \end{equation}
    After solving \eqref{eq:i_in_a_maximizer} $n$ times for each $i\in \Omega$, the desired set $A$ consists of all $i$ such that \eqref{eq:i_in_a_maximizer} holds.
\end{proof}

\section{Proof of Proposition \ref{prop:KKT_characterize_interval}}\label{app:prob_int}
WLOG, assume that $u_i > 0 ~\forall i$ (otherwise $p_i = 0$ and we discard $p_i$),  $\sum_{i=1}^n l_i < 1$ (otherwise $p_i = l_i ~\forall i$), and $\sum_{i=1}^n u_i > 1$ (otherwise $p_i = u_i ~\forall i$). Under these assumptions, at the optimal $p$, we have $p_i > 0~\forall i$.

\begin{proof}
     By the KKT conditions, $p$ is optimal of Problem \eqref{prob:UE_interval} if and only if the following system (with variables $p_i, \alpha_i, \beta_i$ and $\lambda$) is feasible:
\begin{align}
    &l_i \leq p_i \leq u_i \text{ and } \sum_{i=1}^n p_i= 1 \label{constr:primal}\\
    &\alpha_i, \beta_i \geq  0 ~\forall i \label{constr:dual}\\
    &\alpha_i(-p_i + l_i) = 0 \text { and } \beta_i(p_i - u_i) = 0 ~\forall i \label{constr:stackness} \\
    &\log p_i + 1 -\alpha_i + \beta_i + \lambda = 0 ~\forall i \label{constr:vanish_grad}
\end{align}

Let $f(x):=\sum_{i=1}^n\min\{\max\{x, l_i\}, u_i\}$ and $b = \max_{i}u_i$. Hence, $f(0)=\sum_{i=1}^nl_i <1$  and  $f(b)=\sum_{i=1}^nu_i > 1$ (by our assumptions). As $f$ is continuous, by the intermediate value theorem, there exists an $x\in (0, b)$ such that $f(x)=1$. 
We define $p_i = \min\{\max\{x, l_i\}, u_i\} ~\forall i$. We claim that such $p_i$ is feasible to (\ref{constr:primal}-\ref{constr:vanish_grad}) and thus is optimal.  
By the definition of $p_i$, \eqref{constr:primal} automatically holds. We set  $\lambda = -1-\log x.$ For each $i$, if $ l_i < u_i$, we consider three cases: 
\begin{enumerate}
    \item If $l_i <x <u_i$ then $p_i =x$, and we set $\alpha_i = \beta_i=0$.
    \item If $x \leq l_i$ then $p_i =l_i$, and we set $\alpha_i = \log\frac{l_i}{x}$ and $\beta_i=0$.
    \item If $x \geq u_i$ then $p_i =u_i$, and we set $\alpha_i = 0$ and $\beta_i=\log\frac{x}{u_i}$. 
\end{enumerate}
Finally, if $l_i = u_i$, then $p_i = l_i$, and we can set any $\alpha_i, \beta_i$ such that $-\alpha_i + \beta_i = \log \frac{x}{l_i}$.
\end{proof}

\section{Proof of Remark \ref{remark:initial_p0_FW}}\label{proof: remark creating p0}
We make use of the following well-known result about the structure of vertices of $\mathcal{P}(\mu)$ \citep{fujishige2005submodular}. Let $\phi$ be a permutation on $\Omega$, the probability $p$ where $p_{\phi(i)}:= \bar{\mu}(S^\phi_i)-\bar{\mu}(S^\phi_{i-1})$ with $S^\phi_0:= \emptyset$, $S^\phi_i:=\{{\phi(1)}, \ldots, {\phi(i)}\}~\forall i$, is a vertex of $\mathcal{P}(\mu)$.

\begin{proof}
   If $\exists p \in \mathcal{P}(\mu)$ such that $p_i >0~\forall i$ then $\bar{\mu}(\{i\})> p_i > 0$. Conversely, assume that  $\bar{\mu}(\{i\})>0~\forall i$. Consider $n$ permutations $\phi^1, \ldots, \phi^n$ on $\Omega$ where $\phi^i(1)=i~\forall i$. Because of the above-mentioned result, each $\phi^i$ induces a vertex of $\mathcal{P}(\mu)$ for which its $i$th component is positive. Taking the average of these $n$ vertices, we obtain a probability $p \in \mathcal{P}(\mu)$ such that $p_i >0 ~\forall i$. Moreover, this $p$ is found in $O(n^2\mathrm{EO})$.
\end{proof}

\end{document}